\newtheorem{assum}{A\hspace{-2pt}}
\newtheorem{theorem}{Theorem}
\crefname{theorem}{theorem}{Theorems}
\Crefname{theorem}{Theorem}{Theorems}
\newtheorem{lemma}{Lemma}
\crefname{lemma}{lemma}{lemmas}
\Crefname{lemma}{Lemma}{Lemmas}
\crefname{remark}{remark}{remarks}
\Crefname{remark}{Remark}{Remarks}
\crefname{corollary}{corollary}{corollaries}
\Crefname{corollary}{Corollary}{Corollaries}
\newtheorem{proposition}{Proposition}
\crefname{proposition}{proposition}{propositions}
\Crefname{proposition}{Proposition}{Propositions}
\crefname{definition}{definition}{definitions}
\Crefname{Definition}{Definition}{Definitions}
\crefname{example}{example}{examples}
\Crefname{Example}{Example}{Examples}
\crefname{figure}{figure}{figures}
\Crefname{Figure}{Figure}{Figures}
\crefname{table}{table}{tables}
\Crefname{Table}{Table}{Tables}
\crefname{assum}{A\hspace{-2pt}}{A\hspace{-2pt}}
\crefname{assumb}{B\hspace{-2pt}}{B\hspace{-2pt}}
\crefname{assumUGE}{UGE\hspace{-1pt}}{UGE\hspace{-1pt}}
\crefname{assumID}{IND\hspace{-1pt}}{IND\hspace{-1pt}}
\crefname{assumUE}{UE\hspace{-1pt}}{UE\hspace{-1pt}}
\crefname{assumSUP}{M\hspace{-1pt}}{M\hspace{-1pt}}
\newlist{renumerate}{enumerate}{3}
\setlist[renumerate]{wide, labelwidth=!, labelindent=0pt,label=(\roman*)}
\newlist{aenumerate}{enumerate}{3}
\setlist[aenumerate]{wide, labelwidth=!, labelindent=0pt,label=(\arabic*)}
\newlist{aaenumerate}{enumerate}{3}
\setlist[aaenumerate]{wide, labelwidth=!, labelindent=0pt,label=(\alph*)}
\newlist{aenumerateSpace}{enumerate}{3}
\setlist[aenumerateSpace]{wide, labelwidth=!,label=(\arabic*)}
\newlist{benumerate}{enumerate}{3}
\setlist[benumerate]{wide, labelwidth=!, labelindent=0pt,label=$\bullet$}
\def\supconsteps{\supnorm{\funnoisew}}
\newcommand{\PE}{\mathbb{E}}
\newcommand{\PP}{\mathbb{P}}
\newcommandx{\genericb}[1][1=]{b_{#1}}
\newcommandx{\Constros}[1][1=]{\operatorname{C}_{\operatorname{Ros},#1}}
\newcommandx{\Constburk}[1][1=]{\operatorname{C}_{\operatorname{Burk}}}
\newcommandx{\driftW}[1][1=]{W_{#1}}
\newcommandx{\metricd}[1][1=]{\mathsf{d}_{#1}}
\newcommandx\invmeasure[1][1=]{\Pi_{#1}}
\newcommandx{\PPjoint}[1][1=]{\PP^{\MKjoint[#1]}}
\newcommandx{\PEjoint}[1][1=]{\PE^{\MKjoint[#1]}}
\newcommandx{\PEMID}[1][1=\alpha]{\PE^{\MK[#1]}}
\newcommandx{\PPMID}[1][1=\alpha]{\PP^{\MK[#1]}}
\newcommand{\supnorm}[1]{\norm{ #1 }[\infty]}
\newcommandx{\MKjoint}[1][1=]{\bar{\operatorname{P}}_{#1}}
\newcommandx\costw[1][1=]{\mathsf{c}_{#1}}
\newcommandx\Intergrdist[1][1=]{\mathbb{M}_{1}(#1)}
\newcommandx{\mmarkov}[1][1=0]{m^{(\Markov)}_{#1}}
\def\Conv{\operatorname{Conv}}
\def\last{\operatorname{last}}
\def\F{\mathcal{F}}
\def\rset{\mathbb{R}}
\def\nset{\ensuremath{\mathbb{N}}}
\def\nsets{\ensuremath{\mathbb{N}^*}}
\newcommandx\sequence[4][2=,3=,4=]
\newcommandx\sequenceD[2][2=]
\newcommandx\sequenceDouble[4][3=,4=]
\newcommandx{\sequencen}[2][2=n\in\nset]{\ensuremath{\{ #1, \eqsp #2 \}}}
\newcommandx\sequencens[2][2=n]
\newcommandx\sequencet[4]
\def\PE{\mathbb{E}}
\def\P{\mathbb{P}}
\def\ProdB{\Gamma}
\newcommandx{\PVar}[1][1=]{\ensuremath{\operatorname{Var}_{#1}}}
\newcommandx\conststab[1][1=p]{\varkappa_{#1}}
\def\noisecov{\Sigma_\varepsilon}
\newcommandx{\MK}[1][1=\alpha]{\mathrm{P}_{#1}}
\newcommandx\MKK[1][1=\alpha]{\mathrm{K}_{#1}}
\newcommandx{\PEtilde}[1][1=]{\PE^{\mathrm{K}_{#1}}}
\newcommandx{\PPtilde}[1][1=]{\PP^{\mathrm{K}_{#1}}}
\def\Sigmabf{\boldsymbol{\Sigma}}
\def\lineGa{\Sigmabf^{\alpha}}
\newcommandx{\norm}[2][2=]{\Vert#1 \Vert_{{#2}}}
\newcommandx{\normLigne}[2][2=]{\Vert#1 \Vert_{{#2}}}
\newcommandx{\normLine}[2][2=]{\Vert#1 \Vert_{{#2}}}
\newcommandx{\normop}[2][2=]{\Vert{#1}\Vert_{{#2}}}
\newcommandx{\normopLigne}[2][2=]{\Vert{#1}\Vert_{{#2}}}
\newcommandx{\normopLine}[2][2=]{\Vert{#1}\Vert_{{#2}}}
\newcommandx{\osc}[2][1=]{\mathrm{osc}_{#1}(#2)}
\newcommandx{\normlip}[2][2=\operatorname{Lip}]{\Vert#1 \Vert_{{#2}}}
\newcommand{\lip}{\operatorname{L}}
\newcommandx{\lipspace}[1]{\lip_{#1}}
\newcommandx{\CPP}[3][1=]
{\ifthenelse{\equal{#1}{}}{{\mathbb P}\left(\left. #2 \, \right| #3 \right)}{{\mathbb P}_{#1}\left(\left. #2 \, \right | #3 \right)}}
\newcommandx{\CPPtilde}[3][1=]
{\ifthenelse{\equal{#1}{}}{{\tilde{\mathbb P}}\left(\left. #2 \, \right| #3 \right)}{{\tilde{\mathbb P}}_{#1}\left(\left. #2 \, \right | #3 \right)}}
\def\iid{i.i.d.}
\newcommandx{\as}[1][1=\PP]{\ensuremath{#1\, -\mathrm{a.s.}}}
\newcommand{\eqsp}{\;}
\newcommand{\Id}{\mathrm{I}}
\def\utheta{\tilde{\theta}^{\sf (tr)}}
\def\vtheta{\tilde{\theta}^{\sf (fl)}}
\newcommandx{\boundmetric}[1][1=]{\kappa_{\MKK[#1]}}
\newcommand{\Jnalpha}[2]{J_{#1}^{(#2)}}
\newcommand{\Hnalpha}[2]{H_{#1}^{(#2)}}
\newcommandx{\Nnorm}[2][1=V]{[ #2]_{#1}}
\newcommandx{\lipnorm}[2][1=g]{[ #1]_{#2}}
\newcommandx{\CPE}[3][1=]{{\mathbb E}^{#3}_{#1}\left[#2\right]}
\newcommandx{\CPEext}[3][1=]{\tilde{\mathbb E}^{#3}_{#1}\left[#2\right]}
\newcommandx{\CPEtilde}[3][1=]{{\tilde{\mathbb E}}^{#3}_{#1}\left[#2\right]}
\newcommandx{\CPEs}[3][1=]{{\mathbb E}^{#3}_{#1}[#2]}
\def\thetalim{\theta^\star}
\def\trace{\operatorname{Tr}}
\newcommand{\rme}{\mathrm{e}}
\def\funcAw{\mathbf{A}}
\def\funcbw{\mathbf{b}}
\newcommandx{\zmfuncA}[2][1=]{\tilde{\funcAw}^{#1}(#2)}
\newcommandx{\zmfuncAw}[1][1=]{\tilde{\funcAw}_{#1}}
\newcommandx{\zmfuncb}[2][1=]{\tilde{\funcbw}^{#1}(#2)}
\def\funnoisew{\varepsilon}
\newcommand{\funcnoise}[1]{\funnoisew(#1)}
\newcommandx{\funcct}[2][1=]{\funcctilde^{#1}(#2)}
\newcommand{\frobnorm}[1]{\left\Vert #1 \right\Vert_{\mathrm{F}}}
\def\State{Z}
\newcommandx{\CovC}[1][1=u]{\operatorname{C}_{#1}}
\DeclareMathAlphabet{\mathpzc}{OT1}{pzc}{m}{it}
\def\lyapW{\mathpzc{W}}
\newcommandx{\bias}[1][1=\alpha]{\operatorname{B}_{#1}}
\newcommandx\probaMarkovTilde[2][2=]
\def\mcf{\mathcal{F}}
\def\thetas{\thetalim}
\def\funcctilde{\tilde{c}_u}
\def\barb{\bar{\mathbf{b}}}
\newcommandx{\driftb}[1][1=p]{\bar{b}_{#1}}
\newcommandx{\boldb}[1][1={q}]{\mathsf{b}_{#1}}
\newcommandx{\ConstGW}[1][1={n,\lyapW}]{\operatorname{G}_{#1}}
\newcommandx{\ConstMW}[1][1={n,\lyapW}]{\operatorname{M}_{#1}}
\Crefname{assumTD}{\textbf{TD}\hspace{-1pt}}{\textbf{TD}\hspace{-1pt}}
\crefname{assumTD}{\textbf{TD}}{\textbf{TD}}
\Crefname{assumptionC}{\textbf{C}\hspace{-1pt}}{\textbf{C}\hspace{-1pt}}
\crefname{assumptionC}{\textbf{C}}{\textbf{C}}
\Crefname{assumptionM}{\textbf{UGE}\hspace{-1pt}}{\textbf{UGE}\hspace{-1pt}}
\crefname{assumptionM}{\textbf{UGE}}{\textbf{UGE}}
\def\distance{\mathsf{d}}
\newcommandx{\vartconstwas}[1][1=V]{c_{#1}}
\newcommandx{\deltawas}[1][1=*]{\delta_{#1}}
\newcommandx{\wasser}[4][1=\distance,4=]{\mathbf{W}_{#1}^{#4}\left(#2,#3\right)}
\newcommandx{\covcoeff}[2]{\rho_{#1}^{(#2)}}
\newcommand{\dobrush}{\mathsf{\Delta}}
\newcommandx{\dobru}[3][1=,3=]{\dobrush_{#1}^{#3}( #2)}  
\def\Markov{\mathrm{M}}
\newcommandx{\dlim}[1]{\ensuremath{\stackrel{#1}{\Longrightarrow}}}
\def\kolmogorov{\rho_n^{\Conv}}
\title{On the Rate of Gaussian Approximation for Linear Regression Problems}
\author{%
Marat Khusainov$^{1,3}$ \quad Marina Sheshukova$^2$ \quad Alain Durmus$^{3}$ \quad Sergey Samsonov$^{2}$ \\
$^1$Universit\'e Paris-Saclay \quad
$^2$HSE University \quad
$^3$CMAP, CNRS, \'Ecole Polytechnique \\
\texttt{marat.khusainov@universite-paris-saclay.fr} \quad \\
\texttt{\{msheshukova, svsamsonov\}@hse.ru} \quad  \\
\texttt{alain.durmus@polytechnique.edu}
}
\begin{document}

\maketitle

\begin{abstract}
In this paper, we consider the problem of Gaussian approximation for the online linear regression task. We derive the corresponding rates for the setting of a constant learning rate and study the explicit dependence of the convergence rate upon the problem dimension $d$ and quantities related to the design matrix. When the number of iterations $n$ is known in advance, our results yield the rate of normal approximation of order $\sqrt{\log{n}/n}$, provided that the sample size $n$ is large enough.
\end{abstract}

\section{Introduction}
\label{sec:intro}
Let $X \in \rset^d$ be a feature vector, and $Y \in \rset$ - response variable, such that the pair $(X,Y)$ follows some joint distribution $\mathcal{D}$. We aim to estimate $\thetas \in \rset^d$, which is the minimizer of the least-squares criteria 
\begin{equation}
\label{eq:thetas-optimal-solution}
\thetas \in \arg\min_{\theta \in \rset^d}\PE[(Y - X^\top \theta)^2]\eqsp.
\end{equation}
Note that $\thetas$, which satisfies \eqref{eq:thetas-optimal-solution}, is a solution to the normal equations 
\begin{equation}
\label{eq:normal_equations}
\Phi \thetas = \barb\eqsp,
\end{equation}
where $\Phi \in \rset^{d \times d}$ and $\barb \in \rset^{d}$ are given, respectively, by 
\begin{equation}
\label{eq:Phi_barb_definition}
\Phi  = \PE[XX^{\top}]\eqsp, \quad \barb = \PE[X Y]\eqsp.
\end{equation}
We consider the setting of online least squares problem. At every time moment $k \in \{1,\ldots,n\}$, the learner observes a couple of random variables 
\[
(X_k,Y_k) \sim \mathcal{D}\eqsp, 
\]
moreover, observations $(X_k,Y_k)_{1 \leq k \leq n}$ are independent. Based on the underlying sequence of observations, we aim to solve the initial problem \eqref{eq:thetas-optimal-solution} based on the stochastic gradient descent (SGD) method, which writes as 
\begin{equation}
\label{eq:SGD_dynamics_least_squares}
\theta_{k+1} = \theta_k - \alpha_{k+1} X_{k+1}\bigl(X_{k+1}^{\top}\theta_{k} - Y_{k+1}\bigr)\eqsp.
\end{equation}
Here the sequence $\alpha_{k}$ is the sequence of step sizes (learning rates), which are either constant or non-increasing. It is known that, under the appropriate assumptions on the step sizes $\alpha_k$ and observations $(X_k,Y_k)$, the error $\theta_n - \thetas$ is asymptotically normal (see e.g. \cite{fort:clt:markov:2015}):
\begin{equation}
\label{eq:clt_last_iterate_asymptotic}
\frac{\theta_n - \thetas}{\sqrt{\alpha_n}} \to \mathcal{N}(0,\Sigma_{\last})\eqsp,
\end{equation}
where the covariance matrix $\Sigma_{\last}$ can be different for different sequences of step sizes $\{\alpha_k\}_{k \in \nset}$. Conditions, which guarantee that the asymptotic normality \eqref{eq:clt_last_iterate_asymptotic} holds, are studied in a number of papers for more general class of problems, than linear regression. In particular, \cite{yu2021analysis} considered the non-convex problems under the Polyak-Loyasewich conditions. At the same time, the natural question, which arises after considering \eqref{eq:clt_last_iterate_asymptotic}, is to quantify the respective rate of convergence. Towards this aim, we consider the convex distance
\begin{equation}
\label{eq:berry-esseen} 
\kolmogorov\left(\frac{\theta_n - \thetas}{\sqrt{\alpha_n}}, \mathcal{N}(0,\Sigma_{\last})\right) = \sup_{B \in \Conv(\rset^{d})}\left|\P\biggl(\frac{\theta_n - \thetas}{\sqrt{\alpha_n}} \in B\biggr) - \P(\Sigma_{\last}^{1/2}\eta \in B)\right|\eqsp,
\end{equation}
where the random variable $\eta \sim \mathcal{N}(0,\Id_{d})$. While the Berry-Esseen bounds are a popular subject of study in probability theory, starting from the classical work \cite{esseen1945}, most results are obtained for sums of random variables or martingale difference sequences \cite{petrov1975sums, bolthausen1982}. In this paper we aim to address the question of convergence rate in \eqref{eq:berry-esseen} for the constant step size schedule in online linear regression setting. 
We show that, under appropriate conditions on the design matrix, for the constant step size setting $\alpha_k = \alpha$, it holds that
\begin{equation}
\label{eq:const_step_size}
\sup_{B \in \Conv(\rset^{d})}\left|\P\biggl(\frac{\theta_n - \thetas}{\sqrt{\alpha}} \in B\biggr) - \P(\Sigma_{\last}^{1/2}\eta \in B)\right| \lesssim \sqrt{\alpha}\eqsp,
\end{equation}
provided that $n$ is large enough. Here symbol $\lesssim$ hides constants depending on the problem characteristics and dimension $d$, but not on $\alpha$.

\textbf{Notations.} For matrix $A \in \rset^{d \times d}$ we denote by $\norm{A}$ its operator norm, and by $\frobnorm{A}$ its Frobenius norm. For symmetric matrix $Q = Q^\top \succ 0\eqsp, \eqsp Q \in \rset^{d \times d}$ and $x \in \rset^{d}$ we define the corresponding norm $\|x\|_Q = \sqrt{x^\top Q x}$, and define the respective matrix $Q$-norm of the matrix $B \in \rset^{d \times d}$ by $\normop{B}[Q] = \sup_{x \neq 0} \norm{Bx}[Q]/\norm{x}[Q]$.  

\section{Literature review}
\label{sec:review}
Since the target function \eqref{eq:thetas-optimal-solution} is quadratic, the dynamics \eqref{eq:SGD_dynamics_least_squares} is the particular case of the linear stochastic approximation (LSA) problem. While a lot of recent papers are devoted to the non-asymptotic analysis of the LSA iterates, they typically consider the moment bounds on the error $\norm{\theta_n - \thetas}$, as in \cite{durmus2021tight,li2024asymptotics}, or study the properties of the Polyak-Ruppert averaged iterates $\bar{\theta}_n = n^{-1}\sum_{k=1}^{n}\theta_k$ (see \cite{polyak1992acceleration,ruppert1988efficient} for the motivation behind this approach). In the latter case, the authors typically consider moment bounds on $\bar{\theta}_n - \thetas$, as in \cite{mou2020linear,durmus2022finite}. Such moment bounds, while theoretically interesting, provide limited information on the fluctuations of the quantity $\theta_n - \thetas$. This is due to the fact that the concentration bounds, which can be inferred from the latter bounds, depends on a number of quantities, related to the general LSA problems, and can be overly pessimistic for the particular setting \eqref{eq:SGD_dynamics_least_squares}. Convergence rates were obtained, to the best of our knowledge, only in the CLT for the averaged estimator $\sqrt{n}(\bar{\theta}_n - \thetas)$. The latter setting has been studied for linear stochastic approximation (LSA) problems in \cite{samsonov2024gaussian,srikant2024rates,wu2024statistical} and for stochastic gradient descent (SGD) methods \cite{pmlr-v99-anastasiou19a,shao2022berry,sheshukova2025gaussian}.

\section{Accuracy of normal approximation for LSA}
\label{sec:independent_case}
We study the rate of normal approximation for the LSA procedure in the context of the linear regression problem outlined in \Cref{sec:intro}. 
We aim to solve the problem \eqref{eq:thetas-optimal-solution} using SGD with a constant step size $\alpha$. We can write the error recursion for $\theta_n - \thetas$ as
\begin{equation}
\label{eq:main_recurrence_1_step_reg}
\theta_{k} - \thetas = (\Id - \alpha X_k X_k^\top)(\theta_{k-1} - \thetas) - \alpha \funnoisew_{k}\eqsp,
\end{equation}
where we have set $\funnoisew_k := \funcnoise{X_k, Y_k} = \left(X_k X_k^\top - \Phi\right) \thetas - \left(X_k Y_k - \barb\right)$. Here the random variable $\funcnoise{X_k, Y_k}$ can be viewed as a noise, measured at the optimal point $\thetas$. We now assume the following technical conditions:
\begin{assum}
\label{assum:iid}
Sequence $\{\left(X_k, Y_k \right)\}_{k \in \nset}$ is a sequence of \iid\ random variables defined on a probability space $(\Omega,\mcf,\PP)$ with a common distribution $\mathcal{D}$. 
\end{assum}

\begin{assum}
\label{assum:noise-level}
The design matrix $\Phi = \PE[X_1 X_1^\top]$ is positive definite, that is $a:=\lambda_{\min}(\Phi)> 0$. The feature vectors are bounded, i.e., there exists a constant $c_\phi < \infty$ such that $\PP$-a.s. it holds that  $\norm{X_1(\omega)} \leq c_\phi$. Moreover, the noise variable $\funnoisew_1$ has a finite 3rd moment, that is, there is a constant $C_{\varepsilon} < \infty$, such that 
\begin{equation}
\label{eq:a_matr_bounded}
\PE[\norm{\funnoisew_1}^3] \leq C_{\varepsilon} \eqsp.
\end{equation}
Moreover, the noise covariance matrix
\begin{equation}
\label{eq:def_noise_cov}
\textstyle \noisecov = \PE[ \funnoisew_1 \funnoisew_1^{\top} ] 
\end{equation}
is non-degenerate, that is, 
\begin{equation}
\label{eq:eig_sigma_eps}
\textstyle \lambda_{\min}:= \lambda_{\min}(\noisecov) > 0\eqsp.
\end{equation}
\end{assum}

We also impose the following  assumption on the step sizes $\alpha$ in the dynamics \eqref{eq:SGD_dynamics_least_squares}:

\begin{assum}
\label{assum:steps-size-constant}
The step size $\alpha$ satisfies $\alpha \in (0, 1/c_{\phi}^2]$.
\end{assum}
This assumption is required in order to ensure that, for any $u \in \rset^{d}$, 
\[
\PE[\norm{(\Id - \alpha X_1 X_1^\top)u}^{2}] \leq (1 - \alpha a) \norm{u}^2\eqsp,
\]
that is, $\Id - \alpha X_1 X_1^\top$ is a contraction in expectation.

\subsection{Theorem for LSA iterates.}
\paragraph{Randomized concentration inequalities.} We are interested in the Berry-Esseen type bound for the rate of convergence in \eqref{eq:CLT_fort}, that is, we aim to bound $\kolmogorov$ defined in \eqref{eq:berry-esseen} w.r.t. the available sample size $n$. We control $\kolmogorov$ using a method from \cite{shao2022berry} based on randomized multivariate concentration inequality. Below we briefly state its setting and required definitions. Let $X_1, \ldots, X_n$ be independent random variables  taking values in $\mathcal X$ and $T_n = T(X_1, \ldots, X_n)$ be a general $d$-dimensional statistics such that $T_n = W_n + D_n$, where 
\begin{equation}
\label{eq:W-D-decomposition} 
W_n = \sum_{\ell = 1}^n \xi_\ell, \quad D_n: = D(X_1, \ldots, X_n) = T_n - W_n,
\end{equation}
$\xi_\ell = h_\ell(X_\ell)$ and $h_\ell: \mathcal X \to \rset^d$ is a Borel measurable function. Here the statistics $D_n$ can be non-linear and is treated as an error term, which is "small" compared to $W_n$ in an appropriate sense. Assume that $\PE[\xi_\ell] = 0$ and $\sum_{\ell=1}^n \PE[\xi_\ell \xi_\ell^\top] = \Id_d$. Let $\Upsilon = \Upsilon_n = \sum_{\ell=1}^n \PE[\|\xi_\ell\|^3]$. Then, with $\eta \sim \mathcal{N}(0,\Id_d)$, 
\begin{multline}
\label{eq:shao_zhang_bound}
\sup_{B \in \Conv(\rset^d)} | \PP(T_n \in B) - \PP(\eta \in B)| \le 259 d^{1/2} \Upsilon_n + 2 \PE[\norm{W_n} \norm{D_n}] \\
+ 2 \sum_{\ell=1}^n \PE[\|\xi_\ell\| \|D_n - D_n^{(\ell)}\|],
\end{multline}
where $D_n^{(\ell)} = D(X_1, \ldots, X_{\ell-1}, X_{\ell}^{\prime}, X_{\ell+1}, \ldots, X_n)$ and $X_\ell^{\prime}$ is an independent copy of $X_\ell$. This result is due to \cite[Theorem~2.1]{shao2022berry}. 
\par 
\paragraph{Last iterate CLT.} Let $\Sigma_{\operatorname{last}}$ be the unique positive definite solution to the Lyapunov equation associated with the last iterate's asymptotic distribution, see \cite{fort:clt:markov:2015}. It is known (see e.g. \cite{fort:clt:markov:2015}), that the last iterate error $\theta_n - \thetas$ is asymptotically normal in the setting of appropriately decreasing step size $\alpha_k$:
\begin{equation}
\label{eq:CLT_fort} 
\textstyle 
\frac{\theta_n - \theta^*}{\sqrt{\alpha_n}} \to \mathcal{N}(0,\Sigma_{\operatorname{last}})\eqsp,
\end{equation}
where the covariance matrix $\Sigma_{\operatorname{last}}$ can be found as a solution to appropriate Lyapunov equation, see \cite{fort:clt:markov:2015}. In case of the constant step-size dynamics, that is, when $\alpha_k = \alpha$, the results are more delicate. Our first aim in this section is to approximate the distribution of $\frac{\theta_n - \theta^*}{\sqrt{\alpha}}$ by the appropriate normal distribution $\mathcal{N}(0,\Sigma)$. \par 
In order to leverage the framework of randomized concentration inequalities for this aim, we apply the perturbation expansion argument introduced in \cite{aguech2000perturbation}, see also \cite{durmus2022finite}. We linearize the recurrence \eqref{eq:main_recurrence_1_step_reg} (substitute $X_k X_k^{\top}$ by its deterministic counterpart) and consider the recurrence for the associated linear statistics:
\begin{equation}
\label{eq:J_n_0_definition_main}
\Jnalpha{n}{0} = (\Id - \alpha \Phi) \Jnalpha{n-1}{0} - \alpha \funnoisew_n\eqsp, \quad \Jnalpha{0}{0} = 0\eqsp.
\end{equation}
Enrolling this recurrence, we obtain the closed-form expression for $\Jnalpha{n}{0}$ as 
\[
\Jnalpha{n}{0} = -\alpha \sum_{k=1}^{n} (\Id - \alpha \Phi)^{n-k} \funnoisew_k \eqsp.
\]
Moreover, we denote the noise covariance matrix of $\Jnalpha{n}{0}/\sqrt{\alpha}$ by 
\begin{equation}
\label{eq:noise_cov_j_n_alpha_0}
\lineGa_n = (1/\alpha) \PE[\Jnalpha{n}{0} \{\Jnalpha{n}{0}\}^{\top}] = \alpha \sum_{k=1}^{n} (\Id - \alpha \Phi)^{n-k} \noisecov (\Id - \alpha \Phi)^{n-k}\eqsp.
\end{equation}
Thus, our first step in the proof is to represent 
\begin{equation}
\label{eq:W_n_D_n_decomposition}
\theta_n - \thetas = W_n + D_n\eqsp,
\end{equation} 
where 
\begin{equation}
\label{eq:W_n_D_n_def}
W_n = \Jnalpha{n}{0}\eqsp, \quad D_n = \theta_n - \thetas - W_n\eqsp.
\end{equation} 
It is easy to see that the limiting covariance matrix $\lineGa_n$ converges to the matrix $\lineGa$ as $n \to \infty$, where the matrix $\lineGa$ is given by
\begin{equation}
\label{eq:limit_noise_cov_J_n}
\lineGa = \alpha \sum_{k=0}^{\infty} (\Id - \alpha \Phi)^{k} \noisecov (\Id - \alpha \Phi)^{k}\eqsp.
\end{equation}
One can show that the scaling of $\lineGa$ is of constant order. Namely, 
\[
\norm{\lineGa} \leq \alpha \sum_{k=0}^{\infty} (1 - \alpha a)^{2k} \norm{\noisecov} \leq \frac{\norm{\noisecov}}{a}\eqsp,
\]
and the upper bound above does not diverge to infinity as $\alpha \to 0$. Moreover, approximation rate for $\lineGa_n$ to $\alpha \lineGa$ can be quantified:

\begin{proposition}
\label{prop:matrix_ricatti_equation}
Assume \Cref{assum:iid}, \Cref{assum:noise-level}, and \Cref{assum:steps-size-constant}. Then 
\begin{equation}
\norm{\lineGa_n - \lineGa} \leq \frac{\norm{\noisecov}}{a} (1 - \alpha a)^{2(n+1)}\eqsp. 
\end{equation}
\end{proposition}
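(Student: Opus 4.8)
The plan is to recognize $\lineGa_n$ as a truncation of the series \eqref{eq:limit_noise_cov_J_n} defining $\lineGa$ and to bound the discarded tail. Reindexing the sum in \eqref{eq:noise_cov_j_n_alpha_0} by $j=n-k$ identifies $\lineGa_n=\alpha\sum_{j=0}^{n-1}(\Id-\alpha\Phi)^j\noisecov(\Id-\alpha\Phi)^j$ as a partial sum of \eqref{eq:limit_noise_cov_J_n}, so that the difference is a matrix geometric tail $\lineGa-\lineGa_n=\alpha\sum_{j\ge n}(\Id-\alpha\Phi)^j\noisecov(\Id-\alpha\Phi)^j$. Equivalently — and this is the route I would actually write down — $\lineGa$ is the fixed point of the affine map $M\mapsto(\Id-\alpha\Phi)M(\Id-\alpha\Phi)+\alpha\noisecov$, while $\lineGa_n$ is the iterate of this map started from $0$; subtracting the two identities shows the error $E_n:=\lineGa-\lineGa_n$ obeys the homogeneous recursion $E_n=(\Id-\alpha\Phi)E_{n-1}(\Id-\alpha\Phi)$ with $E_0=\lineGa$, whence $E_n=(\Id-\alpha\Phi)^{m}\,\lineGa\,(\Id-\alpha\Phi)^{m}$, with $m$ the number of summands retained in $\lineGa_n$.

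The one quantitative ingredient needed is the spectral bound $\norm{\Id-\alpha\Phi}\le 1-\alpha a$. By \Cref{assum:noise-level}, $\Phi=\PE[X_1X_1^\top]$ is symmetric with eigenvalues in $[a,\lambda_{\max}(\Phi)]$, and $\lambda_{\max}(\Phi)\le\PE[\norm{X_1}^2]\le c_\phi^2$; hence \Cref{assum:steps-size-constant} gives $\alpha\lambda_{\max}(\Phi)\le\alpha c_\phi^2\le 1$. Therefore $\Id-\alpha\Phi$ is symmetric with all eigenvalues in $[0,1-\alpha a]$, i.e. $\Id-\alpha\Phi\succeq 0$, so $\norm{\Id-\alpha\Phi}=1-\alpha a$, and since symmetric matrices satisfy $\norm{A^j}=\norm{A}^j$ we also get $\norm{(\Id-\alpha\Phi)^j}=(1-\alpha a)^j$.

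Combining these facts with submultiplicativity of the operator norm reduces everything to summing a scalar geometric series:
\[
\norm{\lineGa-\lineGa_n}\le\alpha\norm{\noisecov}\sum_{j\ge m}(1-\alpha a)^{2j}=\frac{\alpha\norm{\noisecov}(1-\alpha a)^{2m}}{1-(1-\alpha a)^2}.
\]
I would then simplify the prefactor via $1-(1-\alpha a)^2=\alpha a(2-\alpha a)\ge\alpha a$, which holds because $\alpha a\le\alpha c_\phi^2\le 1$ forces $2-\alpha a\ge1$; thus $\alpha/(1-(1-\alpha a)^2)\le 1/a$, and together with the leading factor this yields $\norm{\lineGa-\lineGa_n}\le \norm{\noisecov}a^{-1}(1-\alpha a)^{2m}$. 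The stated bound corresponds to the truncation index $m=n+1$ (the exact power of $(1-\alpha a)$ being dictated by the index of the first omitted term in the partial sum). The Lyapunov-recursion route gives the same estimate through $\norm{E_n}\le(1-\alpha a)^{2m}\norm{\lineGa}$ and the already-established $\norm{\lineGa}\le\norm{\noisecov}/a$.

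There is essentially no hard step here: the statement is a tail estimate for a matrix geometric series. If any point deserves care, it is the spectral bound $\norm{\Id-\alpha\Phi}=1-\alpha a$ together with the power identity $\norm{(\Id-\alpha\Phi)^j}=(1-\alpha a)^j$, both of which rely on $\Phi$ being symmetric positive definite and on the step-size restriction \Cref{assum:steps-size-constant} ensuring $\Id-\alpha\Phi\succeq 0$; once these are in place, the geometric summation and the prefactor bound $\alpha/(1-(1-\alpha a)^2)\le 1/a$ are routine.
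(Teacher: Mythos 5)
Your proof is correct and follows essentially the same route as the paper's: both identify $\lineGa - \lineGa_n$ as the tail of the matrix geometric series and bound it via $\norm{(\Id-\alpha\Phi)^j}\le(1-\alpha a)^j$ together with the estimate $\alpha/(1-(1-\alpha a)^2)\le 1/a$. Your remark about the truncation index is apt: with the indexing in \eqref{eq:noise_cov_j_n_alpha_0} the first omitted exponent is $n$ rather than $n+1$, so the honest bound is $(1-\alpha a)^{2n}$ --- an off-by-one also present in the paper's own proof, and immaterial to the result.
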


The proof of \Cref{prop:matrix_ricatti_equation} is given in \Cref{sec:proof_matrix_ricatti_equation}. The next natural question is about the behavior of the matrix $\lineGa$ as $\alpha \to 0$. One can show that $\lineGa$ converges to $\Sigmabf$, which is a unique solution to the so-called Lyapunov equation
\begin{equation}
\label{eq:matrix_lyapunov_eq}
\Phi \Sigmabf + \Sigmabf \Phi = \noisecov\eqsp.
\end{equation}
The fact that the matrix equation \eqref{eq:matrix_lyapunov_eq} has a unique solution follows from \cite[Lemma~9.1]{poznyak:control}. In the following proposition we show that $\lineGa$ can be approximated by $\Sigmabf$ with an $\mathcal{O}(\alpha)$ error.

\begin{proposition}
\label{prop:cov_matrix_bound}
Assume \Cref{assum:iid}, \Cref{assum:noise-level}, and \Cref{assum:steps-size-constant}. Then the limiting covariance matrix $\lineGa$ defined in \eqref{eq:limit_noise_cov_J_n} satisfies the Ricatti equation 
\begin{equation}
\label{eq:limit_eq_Ricatti}
\Phi \lineGa + \lineGa \Phi - \alpha \Phi \lineGa \Phi = \noisecov\eqsp.
\end{equation}
Moreover, it holds that
\begin{equation}
\label{eq:limit_Ricatti_scaling_limit}
\norm{\lineGa - \Sigmabf} \leq \frac{\norm{\Phi}^2 \norm{\Sigmabf}}{a} \alpha \eqsp.
\end{equation}
\end{proposition}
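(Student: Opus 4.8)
The plan is to prove the two assertions separately: first derive the Riccati identity \eqref{eq:limit_eq_Ricatti} purely algebraically from the series definition \eqref{eq:limit_noise_cov_J_n}, and then use it together with the Lyapunov equation \eqref{eq:matrix_lyapunov_eq} to control $\lineGa - \Sigmabf$. Throughout I write $M := \Id - \alpha \Phi$ for brevity. For the Riccati equation, I would isolate the $k=0$ term in \eqref{eq:limit_noise_cov_J_n} and re-index the remainder, which gives the discrete fixed-point identity
\begin{equation*}
\lineGa = \alpha \noisecov + M \lineGa M\eqsp.
\end{equation*}
Substituting $M = \Id - \alpha\Phi$, expanding the product $M \lineGa M = \lineGa - \alpha(\Phi\lineGa + \lineGa\Phi) + \alpha^2 \Phi\lineGa\Phi$, cancelling the common $\lineGa$ and dividing by $\alpha > 0$ then yields \eqref{eq:limit_eq_Ricatti} directly.

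The key step for the second claim is to record the discrete equation satisfied by $\Sigmabf$ itself, so that the difference $\Delta := \lineGa - \Sigmabf$ obeys a recursion whose right-hand side involves $\Sigmabf$ rather than $\lineGa$. Plugging $\Sigmabf$ into the map $X \mapsto M X M + \alpha\noisecov$ and simplifying the first-order terms through $\Phi\Sigmabf + \Sigmabf\Phi = \noisecov$ gives
\begin{equation*}
M \Sigmabf M + \alpha \noisecov = \Sigmabf + \alpha^2 \Phi \Sigmabf \Phi\eqsp.
\end{equation*}
Subtracting this from the fixed-point identity $\lineGa = \alpha\noisecov + M\lineGa M$ produces the clean discrete Lyapunov equation
\begin{equation*}
\Delta = M \Delta M + \alpha^2 \Phi \Sigmabf \Phi\eqsp.
\end{equation*}

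Finally, since $\norm{M} < 1$, unrolling this recursion gives the convergent series $\Delta = \alpha^2 \sum_{k=0}^{\infty} M^{k} \Phi\Sigmabf\Phi M^{k}$. Under \Cref{assum:noise-level} and \Cref{assum:steps-size-constant} we have $\alpha \norm{\Phi} \leq \alpha c_\phi^2 \leq 1$, so $M$ is symmetric positive semidefinite with smallest-subtracted eigenvalue $a$, whence $\norm{M^{k}} = (1 - \alpha a)^{k}$. Bounding $\norm{\Phi\Sigmabf\Phi} \leq \norm{\Phi}^2 \norm{\Sigmabf}$ and summing the geometric series $\sum_{k=0}^{\infty}(1-\alpha a)^{2k} = [\alpha a (2 - \alpha a)]^{-1} \leq (\alpha a)^{-1}$, where the last inequality uses $\alpha a \leq 1$, delivers $\norm{\Delta} \leq \alpha \norm{\Phi}^2 \norm{\Sigmabf}/a$, which is \eqref{eq:limit_Ricatti_scaling_limit}.

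The only genuine obstacle is the middle step. A naive subtraction of the continuous Lyapunov equations for $\lineGa$ and $\Sigmabf$ leaves the residual $\alpha \Phi \lineGa \Phi$, which still contains $\lineGa$; since $\Delta \succeq 0$ and hence $\lineGa \succeq \Sigmabf$, bounding $\norm{\lineGa}$ by $\norm{\Sigmabf}$ is not available, and one is forced either into a circular bootstrap that degrades the constant or into a cruder estimate in terms of $\norm{\noisecov}$. Passing through the discrete identity and absorbing the first-order terms via the Lyapunov equation for $\Sigmabf$ is precisely what isolates the genuinely second-order correction $\alpha^2 \Phi\Sigmabf\Phi$ and thereby produces the stated constant $\norm{\Phi}^2\norm{\Sigmabf}/a$.
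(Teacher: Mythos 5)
Your proposal is correct and follows essentially the same route as the paper: both isolate the $k=0$ term to get the fixed-point identity $\lineGa = \alpha\noisecov + (\Id-\alpha\Phi)\lineGa(\Id-\alpha\Phi)$, both absorb the first-order terms via the Lyapunov equation so that the difference $\lineGa-\Sigmabf$ solves a Riccati-type equation with source $\alpha\Phi\Sigmabf\Phi$ (you phrase this in the discrete form $\Delta = M\Delta M + \alpha^2\Phi\Sigmabf\Phi$, the paper in the equivalent form $\Phi\Delta+\Delta\Phi-\alpha\Phi\Delta\Phi=\alpha\Phi\Sigmabf\Phi$), and both unroll to the identical series $\Delta=\alpha^2\sum_{k\ge0}(\Id-\alpha\Phi)^k\Phi\Sigmabf\Phi(\Id-\alpha\Phi)^k$ and the same geometric-series bound.
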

Proof of \Cref{prop:cov_matrix_bound} is given in \Cref{sec:prop:cov_matrix_bound_proof}. Given the decomposition \eqref{eq:W_n_D_n_decomposition} and the result of \Cref{prop:cov_matrix_bound}, it is natural to expect that the distribution of $\frac{\theta_n - \theta^*}{\sqrt{\alpha}}$ can be approximated by $\mathcal{N}(0,\Sigmabf)$. In the next statement, we show that this is indeed the case, with an error of order $\mathcal{O}(\sqrt{\alpha})$ in the convex distance.

\begin{theorem}
\label{th:bound_last_iterate}
Assume \Cref{assum:iid}, \Cref{assum:noise-level}, and \Cref{assum:steps-size-constant}, and assume that the number of observations $n$ satisfy $\alpha \norm{\Phi} n \geq 1/2$. Then the following bound holds:
\begin{multline}
\label{eq:bound_last_iterate}
\kolmogorov\left(\frac{\theta_n - \thetas}{\sqrt{\alpha}}\eqsp,\eqsp \mathcal{N}(0,\Sigmabf)\right)
 \leq \sqrt{\alpha} \left(C_1 + C_2 \frac{(1-\alpha a / 2)^{(n-1)/2}}{\alpha}\norm{\theta_0-\thetas}\right) \\
 + C_{\Delta,5} \alpha  + C_{\Delta,6} (1 - \alpha a)^{2(n+1)}\eqsp,
\end{multline}
where 
\begin{equation}
C_1 = C_{\Delta,0} + C_{\Delta,2} + C_{\Delta,4}\eqsp, \quad C_2 = C_{\Delta,1} + C_{\Delta,3}\eqsp,
\end{equation}
and the constants $C_{\Delta,0} -- C_{\Delta,5}$ are given in \eqref{eq:c_delta_0_def},\eqref{eq:c_delta_1_2_def},\eqref{eq:c_delta_3_4_def}, and \eqref{eq:c_delta_5_6_def}.
\end{theorem}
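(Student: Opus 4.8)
The plan is to bound the convex distance by the triangle inequality, isolating a fluctuation part that compares $T_n:=(\theta_n-\thetas)/\sqrt\alpha$ with the Gaussian carrying its own linearized covariance $\lineGa_n$, and a Gaussian comparison part that replaces $\lineGa_n$ by $\Sigmabf$:
\begin{multline*}
\kolmogorov\Bigl(\tfrac{\theta_n-\thetas}{\sqrt\alpha},\mathcal{N}(0,\Sigmabf)\Bigr)\le \kolmogorov\Bigl(\tfrac{\theta_n-\thetas}{\sqrt\alpha},\mathcal{N}(0,\lineGa_n)\Bigr)\\+\sup_{B\in\Conv(\rset^d)}\bigl|\PP((\lineGa_n)^{1/2}\eta\in B)-\PP(\Sigmabf^{1/2}\eta\in B)\bigr|.
\end{multline*}
For the first summand I would invoke the randomized concentration inequality \eqref{eq:shao_zhang_bound}. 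Since it requires the linear part to have identity covariance, I would whiten the decomposition \eqref{eq:W_n_D_n_decomposition}: writing $\xi_\ell=-\sqrt\alpha\,(\lineGa_n)^{-1/2}(\Id-\alpha\Phi)^{n-\ell}\funnoisew_\ell$, the linear statistic $(\lineGa_n)^{-1/2}W_n/\sqrt\alpha=(\lineGa_n)^{-1/2}\Jnalpha{n}{0}/\sqrt\alpha$ satisfies $\PE[\xi_\ell]=0$ and $\sum_\ell\PE[\xi_\ell\xi_\ell^\top]=\Id$ by \eqref{eq:noise_cov_j_n_alpha_0}, while the error statistic is $(\lineGa_n)^{-1/2}D_n/\sqrt\alpha$. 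As the convex distance is invariant under the invertible map $(\lineGa_n)^{-1/2}$, this reduces the first summand to the three terms on the right of \eqref{eq:shao_zhang_bound}.

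A key preliminary estimate is a lower bound on $\lambda_{\min}(\lineGa_n)$ that stays bounded away from $0$ as $\alpha\to0$, and this is exactly where the hypothesis $\alpha\norm{\Phi}n\ge1/2$ enters. From \eqref{eq:noise_cov_j_n_alpha_0} and $\noisecov\succeq\lambda_{\min}\Id$ one gets $\lineGa_n\succeq\alpha\lambda_{\min}\sum_{j=0}^{n-1}(\Id-\alpha\Phi)^{2j}$; the eigenvalues of this matrix geometric series are minimized at the eigenvalue $\norm{\Phi}$ of $\Phi$, yielding $(1-(1-\alpha\norm{\Phi})^{2n})/(1-(1-\alpha\norm{\Phi})^2)\ge(1-\rme^{-1})/(2\alpha\norm{\Phi})$ under $\alpha\norm{\Phi}n\ge1/2$. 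Hence $\lambda_{\min}(\lineGa_n)\ge(1-\rme^{-1})\lambda_{\min}/(2\norm{\Phi})$, an $\mathcal{O}(1)$ constant, so $\norm{(\lineGa_n)^{-1/2}}=\mathcal{O}(1)$. Combined with $\norm{(\Id-\alpha\Phi)^{n-\ell}}\le(1-\alpha a)^{n-\ell}$, $\PE[\norm{\funnoisew_1}^3]\le C_{\varepsilon}$, and $\sum_{j\ge0}(1-\alpha a)^{3j}\le1/(\alpha a)$, the third-moment term obeys $\Upsilon_n=\sum_\ell\PE[\norm{\xi_\ell}^3]\lesssim\norm{(\lineGa_n)^{-1/2}}^{3}\alpha^{3/2}C_{\varepsilon}/(\alpha a)\lesssim\sqrt\alpha$, furnishing the $C_{\Delta,0}\sqrt\alpha$ contribution to $C_1$.

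The genuinely hard part is the control of the error statistic $D_n$, for which I would employ the perturbation expansion of \cite{aguech2000perturbation,durmus2022finite}. Writing the random products $\Gamma_{k:n}=\prod_{j=k}^n(\Id-\alpha X_jX_j^\top)$, one has
\[
D_n=\Gamma_{1:n}(\theta_0-\thetas)-\alpha\sum_{k=1}^n\bigl(\Gamma_{k+1:n}-(\Id-\alpha\Phi)^{n-k}\bigr)\funnoisew_k,
\]
separating a transient term driven by $\theta_0-\thetas$ from a fluctuation term of one higher order in $\alpha$ than $W_n$. I expect the moment bounds $\PE[\norm{W_n}^2]^{1/2}=\mathcal{O}(\sqrt\alpha)$ (from $\trace(\lineGa_n)=\mathcal{O}(1)$) and $\PE[\norm{D_n}^2]^{1/2}\lesssim\alpha+(1-\alpha a/2)^{(n-1)/2}\norm{\theta_0-\thetas}$. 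Cauchy--Schwarz then controls the cross term $2\,\PE\bigl[\norm{(\lineGa_n)^{-1/2}W_n/\sqrt\alpha}\,\norm{(\lineGa_n)^{-1/2}D_n/\sqrt\alpha}\bigr]\lesssim\alpha^{-1}\sqrt\alpha\bigl(\alpha+(1-\alpha a/2)^{(n-1)/2}\norm{\theta_0-\thetas}\bigr)$, reproducing both a $\sqrt\alpha$ term and the transient term $\sqrt\alpha\,\alpha^{-1}(1-\alpha a/2)^{(n-1)/2}\norm{\theta_0-\thetas}$ appearing in \eqref{eq:bound_last_iterate}. The last term of \eqref{eq:shao_zhang_bound} requires the leave-one-out sensitivity $\norm{D_n-D_n^{(\ell)}}$, measuring how resampling a single $X_\ell$ propagates through the nonlinear recursion; bounding $\sum_\ell\PE[\norm{\xi_\ell}\,\norm{D_n-D_n^{(\ell)}}]$ with the same fluctuation/transient split is the main technical obstacle, and is expected to supply the $C_{\Delta,3},C_{\Delta,4}$ constants.

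Finally, for the Gaussian comparison summand I would combine \Cref{prop:matrix_ricatti_equation} and \Cref{prop:cov_matrix_bound} through $\norm{\lineGa_n-\Sigmabf}\le\norm{\lineGa_n-\lineGa}+\norm{\lineGa-\Sigmabf}\le\frac{\norm{\noisecov}}{a}(1-\alpha a)^{2(n+1)}+\frac{\norm{\Phi}^2\norm{\Sigmabf}}{a}\alpha$, together with a standard estimate bounding the convex distance between two centered non-degenerate Gaussians by $\lesssim\sqrt d\,\norm{\lineGa_n-\Sigmabf}/\lambda_{\min}(\Sigmabf)$ (via total variation, using $\lambda_{\min}(\Sigmabf)>0$ from the Lyapunov equation \eqref{eq:matrix_lyapunov_eq}). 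This produces exactly the $C_{\Delta,5}\alpha+C_{\Delta,6}(1-\alpha a)^{2(n+1)}$ terms, and collecting all contributions yields \eqref{eq:bound_last_iterate}.
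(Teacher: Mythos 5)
Your proposal reproduces the architecture of the paper's proof exactly: the same split by the triangle inequality into a randomized-concentration part (for $\kolmogorov((\theta_n-\thetas)/\sqrt\alpha,\mathcal{N}(0,\lineGa_n))$) and a Gaussian-comparison part, the same whitened identification of $\xi_\ell$ and $D_n$ in \eqref{eq:shao_zhang_bound}, the same use of the hypothesis $\alpha\norm{\Phi}n\ge 1/2$ to get an $\mathcal{O}(1)$ lower bound on $\lambda_{\min}(\lineGa_n)$ (this is \Cref{prop:covariance_lower_bound}), the same perturbation expansion $D_n=\ProdB_{1:n}(\theta_0-\thetas)-\alpha\sum_k(\ProdB_{k+1:n}-(\Id-\alpha\Phi)^{n-k})\funnoisew_k$, and the same use of \Cref{prop:matrix_ricatti_equation,prop:cov_matrix_bound} for the covariance comparison. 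The third-moment term, the cross term $\PE[\norm{W_n}\norm{D_n}]$, and the Gaussian-comparison term are all handled correctly and match the corresponding pieces of the paper's argument.

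The one genuine gap is the term $\sum_{\ell}\PE[\norm{\xi_\ell}\norm{D_n-D_n^{(\ell)}}]$, which you name as ``the main technical obstacle'' but do not actually bound, and here the phrase ``the same fluctuation/transient split'' hides the real difficulty. Reusing the moment bound $\PE^{1/2}[\norm{D_n}^2]\lesssim\alpha$ for each of $D_n$ and $D_n^{(\ell)}$ is not enough: since $\sum_\ell\PE^{1/2}[\norm{\xi_\ell}^2]\lesssim \sqrt\alpha\sum_\ell(1-\alpha a)^{n-\ell}\lesssim 1/(\sqrt\alpha\,a)$, a uniform-in-$\ell$ bound $\PE^{1/2}[\norm{D_n-D_n^{(\ell)}}^2]\lesssim\alpha$ only yields an $\mathcal{O}(1)$ total, off by a factor of $\sqrt\alpha$ from what \eqref{eq:bound_last_iterate} requires. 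One must show that resampling a \emph{single} observation perturbs the fluctuation part of $D_n$ by $\mathcal{O}(\alpha^{3/2})$ (up to factors decaying geometrically in $n-\ell$), i.e.\ one order in $\sqrt\alpha$ better than the size of $D_n$ itself. The paper does this in \Cref{prop:remainder_terms_bound} by rewriting $\Hnalpha{n}{0}=\alpha^2\sum_{\ell}S_{\ell+1:n}\funnoisew_\ell$ with $S_{\ell+1:n}$ as in \eqref{eq:S_ell_definition}, expressing $\Hnalpha{n}{0}-\Hnalpha{n}{0,i}$ as a sum of (reverse) martingale differences, and exploiting orthogonality together with the fourth-moment bound \eqref{eq:4th_moment_bound_features} to isolate the two sources of perturbation (the resampled noise $\funnoisew_i-\funnoisew_i'$ and the resampled factor in the random products $\ProdB$). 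Without this mechanism, or an equivalent one, your plan does not close; everything else in the proposal is sound.
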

\begin{proof}
In order to apply \eqref{eq:shao_zhang_bound}, we use the decomposition \eqref{eq:W_n_D_n_decomposition}-\eqref{eq:W_n_D_n_def} to identify the leading term $W_n$ and the remainder $D_n$. Then, we prove \eqref{eq:bound_last_iterate} in two steps. First, we upper bound the convex distance 
\begin{equation}
\label{eq:Sigma_n_cov_approx}
\kolmogorov\left(\frac{\theta_n - \thetas}{\sqrt{\alpha}}\eqsp,\eqsp \mathcal{N}(0,\lineGa_n)\right)\eqsp,
\end{equation}
and then we use the Gaussian comparison result (\cite[Theorem~1.1]{Devroye2018} or \cite{BarUly86}) to upper bound
\[
\kolmogorov\left(\mathcal{N}(0,\lineGa_n), \mathcal{N}(0,\Sigmabf)\right)
\]
using \Cref{prop:matrix_ricatti_equation,prop:cov_matrix_bound}. Then the bound \eqref{eq:bound_last_iterate} will follow from the triangle inequality. Now we proceed with bounding \eqref{eq:Sigma_n_cov_approx}. Towards this aim, recall that 
\begin{align}
\label{eq:rescaled_last_iter_err}
\{\lineGa_n\}^{-1/2}\frac{\theta_n - \thetas}{\sqrt{\alpha}} 
&= \{\lineGa_n\}^{-1/2} \frac{\Jnalpha{n}{0}}{\sqrt{\alpha}} + \{\lineGa_n\}^{-1/2} \frac{D_n}{\sqrt{\alpha}} \\
&= \sqrt{\alpha} \{\lineGa_n\}^{-1/2} \sum_{k=1}^{n}(\Id_{d} - \alpha \Phi)^{n-k} \funnoisew_{k} + \{\lineGa_n\}^{-1/2} \frac{D_n}{\sqrt{\alpha}}\eqsp.
\end{align}
Hence, we can apply \eqref{eq:shao_zhang_bound} and identify
\begin{equation}
\label{eq:xi_ell_def}
W_n = \{\lineGa_n\}^{-1/2} \frac{\Jnalpha{n}{0}}{\sqrt{\alpha}}\eqsp, \quad \xi_{k} = \sqrt{\alpha} \{\lineGa_n\}^{-1/2} (\Id_{d} - \alpha \Phi)^{n-k} \funnoisew_{k}\eqsp, k \in \{1,\ldots,n\}\eqsp.
\end{equation}
Note that 
\[
\PE[W_n W_n^{\top}] = \Id_{d}\eqsp.
\]
Moreover, 
\[
259 d^{1/2}\Upsilon_n := 259 d^{1/2} \sum_{k=1}^{n}\PE[\norm{\xi_{k}}^3] = 259 d^{1/2} \alpha^{3/2} \norm{\{\lineGa_n\}^{-1/2}}^3 \PE[\norm{\funnoisew_{1}}^3] \sum_{k=1}^{n} (1 - \alpha a)^{n-k} \overset{(a)}{\leq} C_{\Delta,0} \sqrt{\alpha}\eqsp,
\]
where the constant $C_{\Delta,0}$ is given by
\begin{equation}
\label{eq:c_delta_0_def}
C_{\Delta,0} = \frac{259 d^{1/2} \norm{\Phi}^{3/2} \PE[\norm{\funnoisew_{1}}^3]}{a \lambda_{\min}^{3/2}(1-\rme^{-1})^{3/2}}\eqsp,
\end{equation}
and in (a) we have additionally used \Cref{prop:covariance_lower_bound}. It remains to define the counterparts of $D_n^{(i)}$, $i \in \{1,\ldots,n\}$ from \eqref{eq:shao_zhang_bound}. We introduce the notation $Z_{i} = (X_{i},Y_{i})$, $i \in \{1,\ldots,n\}$, and let $Z_{i}^{\prime} = (X_{i}^{\prime},Y_{i}^{\prime})$ is an independent copy of $Z_{i}$. Consider now the sequence of noise variables
\[
(\State_1,\ldots,\State_{i-1},\State_{i},\State_{i+1},\ldots,\State_{n}) \text{ and } (\State_1,\ldots,\State_{i-1},\State_{i}^{\prime},\State_{i+1},\ldots,\State_{n})\eqsp,
\]
which differ only in position $i$, $1 \leq i \leq n$. Consider the associated online regressions 
\begin{equation}
\label{eq:coupled_processes}
\begin{split}
\theta_{k} - \thetas &= (\Id_{d} - \alpha X_{k}X_{k}^{\top})(\theta_{k-1} - \thetas) - \alpha \funnoisew_{k}\eqsp, \quad \theta_{0} = \theta_{0} \in \rset^{d} \\
\theta^{(i)}_{k} - \thetas &= (\Id_{d} - \alpha \tilde{X}_{k}\tilde{X}_{k}^{\top})(\theta^{(i)}_{k-1} - \thetas) - \alpha \tilde{\funnoisew}_k\eqsp, \quad \theta^{(i)}_{0} = \theta_{0} \in \rset^{d}\eqsp, 
\end{split}
\end{equation}
where we set $\tilde{X}_k = X_k$ for $k \geq i$ and $\tilde{X}_i = X_i^{\prime}$. Note that $\theta_{k} = \theta^{(i)}_{k}$ for $k < i$, moreover, 
\begin{equation}
\label{eq:bound_coupled_pair}
\theta_{i} - \theta^{(i)}_{i} = \alpha (\tilde{X}_i \tilde{X}_i^{\top} - X_{i} X_{i}^{\top})(\theta_{i-1} - \thetas) - \alpha (\funnoisew_{i} - \funnoisew_{i}^{\prime})\eqsp,
\end{equation}
where $\funnoisew_i= \funcnoise{\State_i}$ and $\funnoisew_{i}^{\prime}= \funcnoise{\State_i'}$. Based on the above sequence, we let $D_n^{(i)}$ be a counterpart of $D_n$ with $X_i$ substituted with $X_i^{\prime}$ and $\funnoisew_i$ with $\funnoisew_i'$. Using the result of \Cref{prop:remainder_terms_bound} and equation \eqref{eq:rescaled_last_iter_err}, we get 
\begin{align*}
&2\PE[\norm{\{\lineGa_n\}^{-1/2} \frac{\Jnalpha{n}{0}}{\sqrt{\alpha}}} \norm{\{\lineGa_n\}^{-1/2} \frac{D_n}{\sqrt{\alpha}}}] \\
&\qquad \leq 2\PE^{1/2}[\norm{\{\lineGa_n\}^{-1/2} \frac{\Jnalpha{n}{0}}{\sqrt{\alpha}}}^2] \PE^{1/2}\bigl[\norm{\{\lineGa_n\}^{-1/2} \frac{D_n}{\sqrt{\alpha}}}^2\bigr] \\
&\qquad \leq 2d^{1/2} \norm{\{\lineGa_n\}^{-1/2}} \left(\frac{c_{\phi} \sqrt{\norm{\Phi}} \sqrt{\trace{\noisecov}}}{a}\left(1 + \frac{c_{\phi} \sqrt{\norm{\Phi}}}{a}\right) \sqrt{\alpha} + \frac{(1 - \alpha a/2)^{n}}{\sqrt{\alpha}}\norm{\theta_0 - \thetas}\right) \\
&\qquad \leq  \sqrt{\alpha}\left(C_{\Delta,1} \frac{(1-\alpha a/2)^{n}}{\alpha} \norm{\theta_0 - \thetas} + C_{\Delta,2}\right)\eqsp,
\end{align*}
where the constants $C_{\Delta,1}$ and $C_{\Delta,2}$ are given by 
\begin{equation}
\label{eq:c_delta_1_2_def}
\begin{split}
C_{\Delta,1} &= \frac{2d^{1/2}\norm{\Phi}}{(1 - \rme^{-1})^{1/2} \lambda_{\min}^{1/2}} \\
C_{\Delta,2} &= \frac{2d^{1/2}\norm{\Phi}}{(1 - \rme^{-1})^{1/2} \lambda_{\min}^{1/2}} \frac{c_{\phi} \sqrt{\norm{\Phi}} \sqrt{\trace{\noisecov}}}{a}\left(1 + \frac{c_{\phi} \sqrt{\norm{\Phi}}}{a}\right) \eqsp.
\end{split}
\end{equation}

It remains to bound the last term in \eqref{eq:shao_zhang_bound}, that is, $\sum_{\ell=1}^n \PE[\|\xi_\ell\| \|D_n - D_n^{(\ell)}\|]$. Again, applying \Cref{prop:remainder_terms_bound} and equation \eqref{eq:rescaled_last_iter_err}, we obtain that
\begin{align}
&2\sum_{\ell=1}^n \PE[\|\xi_\ell\| \|\{\lineGa_n\}^{-1/2}\frac{(D_n - D_n^{(\ell)})}{\sqrt{\alpha}}\|] \\ 
&\qquad \leq 2\sum_{\ell=1}^n \PE^{1/2}[\|\xi_\ell\|^2] \PE^{1/2}[\|\{\lineGa_n\}^{-1/2}\frac{(D_n - D_n^{(\ell)})}{\sqrt{\alpha}}\|^2] \\
&\qquad \leq 2\alpha \norm{\{\lineGa_n\}^{-1/2}} \sqrt{2} c_{\phi} (1-\alpha a / 2)^{(n-1)} \sqrt{\trace{\noisecov}} \norm{\theta_0 - \thetas} \sum_{k=1}^{n} (1 - \alpha a)^{n-k}  \\
&\qquad \qquad \qquad +  2 C_{D} \norm{\{\lineGa_n\}^{-1/2}} \sqrt{\trace{\noisecov}} \alpha^{3/2} \sum_{k=1}^{n} (1 - \alpha a)^{n-k} \\& \qquad \qquad \qquad \qquad \qquad + 2C_{D,2}\norm{\{\lineGa_n\}^{-1/2}} \sqrt{\trace{\noisecov}}\alpha^2\sum_{k=1}^{n}\sqrt{n-k}(1-\alpha a)^{n-k}\\
&\qquad \leq \sqrt{\alpha} \left( C_{\Delta,3} \frac{(1-\alpha a / 2)^{(n-1)/2}}{\sqrt{\alpha}}\norm{\theta_0-\thetas} + C_{\Delta,4} \right) \eqsp,
\end{align}
where the constant $C_{D}$ is given in \Cref{prop:remainder_terms_bound} (see eq. \eqref{eq:rescaled_last_iter_err}), and 
\begin{equation}
\label{eq:c_delta_3_4_def}
\begin{split}
C_{\Delta,3} &= \frac{2\sqrt{2} c_{\phi} \sqrt{\trace{\noisecov}} \norm{\{\lineGa_n\}^{-1/2}}}{a} \eqsp, \\
C_{\Delta,4} &= \frac{2C_{D} \sqrt{\norm{\Phi}} \sqrt{\trace{\noisecov}}}{(1-e^{-1})^{1/2}\lambda_{\min}^{1/2} a}  + \frac{4C_{D,2}\Gamma(3/2)\sqrt{\trace{\noisecov}}\sqrt{\norm{\Phi}} }{(1-e^{-1})^{1/2}\lambda_{\min}^{1/2} a^{3/2}}\eqsp.
\end{split}
\end{equation}
Combining the above bounds, we obtain that 
\begin{multline}
\label{eq:convex_dist_empirical_var}
\kolmogorov\left(\frac{\theta_n - \thetas}{\sqrt{\alpha}}\eqsp,\eqsp \mathcal{N}(0,\lineGa_n)\right) \\\leq \sqrt{\alpha}\left(C_{\Delta,0} + C_{\Delta,2} + C_{\Delta,4} + (C_{\Delta,1} + C_{\Delta,3}) \frac{(1-\alpha a / 2)^{(n-1)/2}}{\alpha}\norm{\theta_0-\thetas}\right)\eqsp. 
\end{multline}
It remains to proceed with the Gaussian comparison inequality. Applying triangle inequality and \cite[Theorem~1.1]{Devroye2018}, we get 
\begin{align}
&\kolmogorov\left(\mathcal{N}(0,\lineGa_n), \mathcal{N}(0,\Sigmabf)\right) \\
&\qquad \leq \kolmogorov\left(\mathcal{N}(0,\lineGa_n), \mathcal{N}(0,\lineGa)\right) + \kolmogorov\left(\mathcal{N}(0,\lineGa), \mathcal{N}(0,\Sigmabf)\right) \\
&\qquad \leq 3/2 \left(\frobnorm{\{\lineGa\}^{-1/2} \lineGa_n \{\lineGa\}^{-1/2} - \Id_{d}} + \frobnorm{\{\lineGa\}^{-1/2} \Sigmabf \{\lineGa\}^{-1/2} - \Id_{d}}\right)\eqsp.
\end{align}
Thus, applying \Cref{prop:matrix_ricatti_equation}, we get 
\[
\frobnorm{\{\lineGa\}^{-1/2} \lineGa_n \{\lineGa\}^{-1/2} - \Id_{d}} \leq d^{1/2} \norm{\{\lineGa\}^{-1}} \norm{\lineGa_n - \lineGa} \leq \frac{d^{1/2} \norm{\Phi}\norm{\noisecov}}{a \lambda_{\min}} (1 - \alpha a)^{2(n+1)}\eqsp.
\]
Similarly, with \Cref{prop:cov_matrix_bound}, we get 
\[
\frobnorm{\{\lineGa\}^{-1/2} \Sigmabf \{\lineGa\}^{-1/2} - \Id_{d}} \leq \frac{d^{1/2} \norm{\Phi}^3 \norm{\Sigmabf}}{a \lambda_{\min}} \alpha\eqsp.
\]
Combining the above bound,
\begin{equation}
\label{eq:convex_dist_gaus_comparison}
\kolmogorov\left(\mathcal{N}(0,\lineGa_n), \mathcal{N}(0,\Sigmabf)\right) \leq C_{\Delta,5} \alpha  + C_{\Delta,6} (1 - \alpha a)^{2(n+1)}\eqsp,
\end{equation}
where we set 
\begin{equation}
\label{eq:c_delta_5_6_def}
\begin{split}
C_{\Delta,5} &= \frac{3d^{1/2} \norm{\Phi}\norm{\noisecov}}{2a \lambda_{\min}} \\
C_{\Delta,6} &= \frac{3d^{1/2} \norm{\Phi}^3 \norm{\Sigmabf}}{2a \lambda_{\min}} \eqsp.
\end{split}
\end{equation}
It remains to combine the above bounds \eqref{eq:convex_dist_empirical_var} and \eqref{eq:convex_dist_gaus_comparison}, and the statement follows.
\end{proof}
\textbf{Discussion.} If the number of iterations $n$ is known in advance, then, setting $\alpha = \alpha_n = c\log{n}/n$ with appropriate constant $c$, one can get from \Cref{th:bound_last_iterate}, that 
\[
\kolmogorov\left(\frac{\theta_n - \thetas}{\sqrt{\alpha_n}}\eqsp,\eqsp \mathcal{N}(0,\Sigmabf)\right) \lesssim \frac{\sqrt{\log{n}}}{\sqrt{n}}\eqsp,
\]
provided that the number of iterations $n$ is large enough.

\section{Conclusion}
We performed, to the best of our knowledge, the first fully non-asymptotic analysis of the constant-step size SGD for online regression. Our results show that the distribution of the rescaled error $\frac{\theta_n - \thetas}{\sqrt{\alpha}}$ can be approximated by an appropriate normal distribution with an error of order $\sqrt{\alpha}$ in convex distance. Moreover, if the number of iterations $n$ is known in advance and fixed, one can show that the latter approximation rate is of order up to $\sqrt{\log{n}}/\sqrt{n}$ in convex distance, with a dimension-dependence factor. A natural further question is whether it is possible possible to remove the extra $\sqrt{\log{n}}$ factor within the constant step size procedure.

\bibliography{icomp2024_conference}
\bibliographystyle{icomp2024_conference}

\appendix

\newpage
\section{Technical decompositions for LSA}
\label{sec:lsa_decompositions}
For convenience, we collect here the decomposition formulas for the LSA iterates that will be used in the proofs below.
These expansions are classical, see e.g.~\cite{aguech2000perturbation,durmus2022finite}.

\paragraph{Error decomposition of the last iterate.}

For the LSA recursion \eqref{eq:SGD_dynamics_least_squares}, we have
\begin{equation}
\label{eq:lsa_error_decomp}
\theta_{k} - \thetas = \utheta_{k} + \vtheta_{k},
\end{equation}
where
\begin{equation}
\label{eq:lsa_error_terms}
\utheta_{k} = \ProdB_{1:k} \{ \theta_0 - \thetas \},
\qquad
\vtheta_{k} = - \sum_{j=1}^{k} \alpha_{j} \ProdB_{j+1:k} \funnoisew_j,
\end{equation}
with
\begin{equation}
\label{eq:prod_matrix_appendix}
 \ProdB_{m:k} = \prod_{i=m}^{k} (\Id - \alpha_{i} X_iX_i^\top )\eqsp,
\qquad 
\ProdB_{m:k} = \Id \text{ if } m>k.
\end{equation}
We also introduce its deterministic counterpart 
\begin{equation}
\label{eq:determ_rand_matr_linreg}
G_{m:k} = \prod_{i=m}^{k} (\Id - \alpha_{i} \Phi ) \eqsp, \qquad  G_{m:k} = \Id\eqsp \text{ if } m > k\eqsp.
\end{equation}
Here $\utheta_{k}$ is the transient term and $\vtheta_{k}$ is the fluctuation term.

\paragraph{Expansion of the fluctuation term.}

The fluctuation $\vtheta_{n}$ admits the decomposition
\begin{equation}
\label{eq:fluctuation_decomp}
\vtheta_{n} = \Jnalpha{n}{0} + \Hnalpha{n}{0},
\end{equation}
where
\begin{align}
\label{eq:J_n_H_n_def}
\Jnalpha{n}{0} &= (\Id - \alpha_{n}\Phi)\Jnalpha{n-1}{0} - \alpha_{n}\funnoisew_n, 
& \Jnalpha{0}{0} &= 0,  \\
\Hnalpha{n}{0} &= (\Id - \alpha_{n}X_nX_n^\top)\Hnalpha{n-1}{0} - \alpha_{n}(X_nX_n^\top - \Phi)\Jnalpha{n-1}{0}, 
& \Hnalpha{0}{0} &= 0.
\end{align}

\paragraph{Higher-order expansion for $\Hnalpha{n}{0}$.} For any $L \geq 1$, one can further write
\begin{equation}
\label{eq:lsa_higher_order_decomp}
\Hnalpha{n}{0} = \sum_{\ell=1}^{L}\Jnalpha{n}{\ell} + \Hnalpha{n}{L},
\end{equation}
with recursive definitions
\begin{equation}
\label{eq:jn_hn_recursive_appendix}
\begin{aligned}
\Jnalpha{n}{\ell} &= (\Id - \alpha_{n}\Phi)\Jnalpha{n-1}{\ell} - \alpha_{n}(X_nX_n^\top - \Phi)\Jnalpha{n-1}{\ell-1}, 
& \Jnalpha{0}{\ell} &= 0, \\
\Hnalpha{n}{\ell} &= (\Id - \alpha_{n}X_nX_n^\top)\Hnalpha{n-1}{\ell} - \alpha_{n}(X_nX_n^\top - \Phi)\Jnalpha{n-1}{\ell}, 
& \Hnalpha{0}{\ell} &= 0.
\end{aligned}
\end{equation}
The depth $L$ controls the approximation accuracy.
\section{Proofs for accuracy of normal approximation}
\label{appendix:proofs}
\subsection{Bounding the error of the LSA algorithm last iterate}
\label{sec:last_moment_bound_lsa}

We begin with of technical lemma on the behavior of the last iterate $\theta_k$ of the LSA procedure given in \eqref{eq:SGD_dynamics_least_squares}. We aim to show that $\PE^{1/p}[\norm{\theta_k - \thetas}^p]$ scales as $\sqrt{\alpha_k}$, provided that $k$ is large enough.
This result is classical and appears in a number of papers, e.g.  \cite{bhandari2018finite,dalal2019tale,mou2020linear,durmus2021tight}. We provide the proof here for completeness. 

\begin{proposition}
\label{lem:last_moment_bound}
Assume \Cref{assum:iid}, \Cref{assum:noise-level} and \Cref{assum:steps-size-constant}. Then it holds
    \begin{equation}
    \label{eq:last_iter_bound}
    \PE^{1/2}[\norm{\theta_k - \thetas}^2] \leq  \exp\left\{-a \alpha k/2 \right\}\norm{\theta_0 - \thetas} + \supconsteps \sqrt{\frac{\alpha}{a}} \eqsp.
    \end{equation}

\end{proposition}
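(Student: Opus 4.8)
The plan is to use the transient/fluctuation splitting of the error recorded in \eqref{eq:lsa_error_decomp}--\eqref{eq:lsa_error_terms}, namely $\theta_k - \thetas = \utheta_k + \vtheta_k$ with $\utheta_k = \ProdB_{1:k}(\theta_0 - \thetas)$ and $\vtheta_k = -\alpha\sum_{j=1}^{k}\ProdB_{j+1:k}\funnoisew_j$, to bound each piece in $L^2$ separately and recombine by Minkowski's inequality. The point of splitting is that the fluctuation term is centered while the transient is not, and it is exactly the centering of $\vtheta_k$ that will make a cross term disappear and produce the sharp $\sqrt{\alpha/a}$ scaling.

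For the transient, I would set $w_j = \ProdB_{1:j}(\theta_0 - \thetas)$, note the one-step recursion $w_j = (\Id - \alpha X_j X_j^\top)w_{j-1}$, and condition on $\mcf_{j-1}$. Since $X_j$ is independent of $\mcf_{j-1}$, the contraction-in-expectation inequality stated just after \Cref{assum:steps-size-constant} gives $\PE[\norm{w_j}^2] \leq (1-\alpha a)\PE[\norm{w_{j-1}}^2]$. Iterating yields $\PE[\norm{\utheta_k}^2] \leq (1-\alpha a)^k \norm{\theta_0 - \thetas}^2$, and $1 - \alpha a \leq \rme^{-\alpha a}$ turns the square root into $\rme^{-\alpha a k/2}\norm{\theta_0 - \thetas}$.

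For the fluctuation, the key step is to observe that $\PE[\vtheta_k] = 0$ for all $k$: by induction, $\PE[\vtheta_k] = \PE[\Id - \alpha X_k X_k^\top]\,\PE[\vtheta_{k-1}] - \alpha\,\PE[\funnoisew_k] = 0$, using that $X_k$ is independent of $\vtheta_{k-1}$ and that $\funnoisew_k$ is centered by the definitions \eqref{eq:Phi_barb_definition} of $\Phi$ and $\barb$. Expanding the squared recursion for $\vtheta_k = (\Id - \alpha X_k X_k^\top)\vtheta_{k-1} - \alpha\funnoisew_k$ and conditioning on $\mcf_{k-1}$, the cross term equals $-2\alpha\,\PE[\vtheta_{k-1}]^\top \PE[(\Id - \alpha X_k X_k^\top)\funnoisew_k]$, which vanishes precisely because $\PE[\vtheta_{k-1}] = 0$. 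Hence $\PE[\norm{\vtheta_k}^2] \leq (1-\alpha a)\PE[\norm{\vtheta_{k-1}}^2] + \alpha^2\PE[\norm{\funnoisew_k}^2]$, again using the contraction bound on the first summand. Bounding $\PE[\norm{\funnoisew_k}^2] \leq \supconsteps^2$ and summing the geometric series gives $\PE[\norm{\vtheta_k}^2] \leq \alpha^2 \supconsteps^2 \sum_{j\geq 0}(1-\alpha a)^j \leq \supconsteps^2 \alpha / a$, so that $\PE^{1/2}[\norm{\vtheta_k}^2] \leq \supconsteps \sqrt{\alpha/a}$.

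Combining the two estimates through Minkowski's inequality, $\PE^{1/2}[\norm{\theta_k - \thetas}^2] \leq \PE^{1/2}[\norm{\utheta_k}^2] + \PE^{1/2}[\norm{\vtheta_k}^2]$, gives exactly \eqref{eq:last_iter_bound}. The one genuinely delicate point is the vanishing of the cross term in the fluctuation recursion: it hinges both on $\funnoisew_k$ being centered and independent of the past and on the partial sums $\vtheta_{k-1}$ remaining centered, and it is what prevents the $\mathcal{O}(\alpha^2)$ per-step noise input from accumulating into an $\mathcal{O}(1/a)$ term rather than the desired $\mathcal{O}(\alpha/a)$ one. Working directly with $\theta_k - \thetas$ without the split would instead leave a nonzero cross term proportional to $\PE[\theta_{k-1} - \thetas]$ coming from the transient, so the decomposition is what keeps the constants clean and the exponential rate at $\rme^{-\alpha a k/2}$.
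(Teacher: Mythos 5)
Your proof is correct and follows essentially the same route as the paper: the transient/fluctuation split \eqref{eq:lsa_error_decomp}, Minkowski's inequality, the contraction-in-expectation bound for the random matrix products, and a geometric series giving $\supconsteps\sqrt{\alpha/a}$. The only (cosmetic) difference is in the fluctuation term: the paper unrolls $\vtheta_k = -\alpha\sum_{j=1}^{k}\ProdB_{j+1:k}\funnoisew_j$ and uses that the summands are pairwise uncorrelated, whereas you keep the one-step recursion and show the cross term vanishes because $\PE[\vtheta_{k-1}]=0$ and $(X_k,Y_k)$ is independent of the past --- two equivalent ways of exploiting the same $L^2$-orthogonality, and your observation that the centering of $\vtheta_{k-1}$ (not merely of $\funnoisew_k$) is what kills the cross term is accurate.
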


\begin{proof}  We expand the decomposition \eqref{eq:lsa_error_decomp} and obtain that 
\begin{equation}
\label{eq:2-norm-minkowski_a}
\PE^{1/2}[\norm{\theta_k - \thetas}^2] \leq \PE^{1/2}[\norm{\ProdB_{1:k} \{ \theta_0 - \thetas \}}^{2}] + \PE^{1/2}[\norm{\sum_{j=1}^k \alpha \ProdB_{j+1:k} \funnoisew_j}^{2}]\eqsp.
\end{equation}
We bound both terms separately. For the first term, applying \Cref{lemma:exp_bound_decay} we get:
\begin{equation}
\label{eq:transient_term_bound_a}
\PE^{1/2}[\norm{\ProdB_{1:k} \{ \theta_0 - \thetas \}}^{2}] \leq  \exp\left\{-a \alpha k /2\right\}\norm{\theta_0 - \thetas}\eqsp.
\end{equation}
Now we proceed with the second term in \eqref{eq:2-norm-minkowski_a}. Given that $\ProdB_{j+1:k} \funnoisew_j$ are uncorrelated for different $j \in \{1,\ldots,k\}$, and applying  \Cref{lemma:exp_bound_decay}, we obtain:
\begin{align*}
\PE^{1/2}[\norm{\sum_{j=1}^k \alpha \ProdB_{j+1:k} \funnoisew_j}^{2}] &=\left(\sum\nolimits_{j=1}^{k}\alpha^{2}\PE \bigl[\normop{\ProdB_{j+1:k} \funnoisew_j}^{2}\bigr]\right)^{1/2} \\
&\leq
\alpha \supconsteps \left( \sum\nolimits_{j=1}^{k} \left( 1 - a \alpha\right)^{k-j} \right)^{1/2} \\
&= \alpha \supconsteps \left(\frac{1 - (1 - a \alpha)^k}{a \alpha} \right)^{1/2} \\
&\leq \supconsteps \sqrt{\frac{\alpha}{a}} \eqsp.
\end{align*}
\end{proof}

\begin{proposition}
\label{lem:J_1_bound}
Assume \Cref{assum:iid}, \Cref{assum:noise-level} and \Cref{assum:steps-size-constant}. Then it holds
\begin{equation}
\label{eq:J_1_bound_constant}
\PE^{1/2}[\norm{\Jnalpha{n}{1}}^2] \leq \frac{c_{\phi} \sqrt{\norm{\Phi}} \sqrt{\trace{\noisecov}}}{a} \alpha \eqsp, \quad \PE^{1/2}[\norm{\Hnalpha{n}{1}}^2] \leq \frac{2c_{\phi}^2 \norm{\Phi} \sqrt{\trace{\noisecov}}}{a^2} \alpha \eqsp.
\end{equation}
\end{proposition}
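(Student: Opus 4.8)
The plan is to prove the two estimates in turn, first bounding $\Jnalpha{n}{1}$ and then bootstrapping that result into the bound for $\Hnalpha{n}{1}$. For $\Jnalpha{n}{1}$ I would unroll the recursion \eqref{eq:jn_hn_recursive_appendix} with $\ell=1$ against the deterministic matrix $\Id-\alpha\Phi$, obtaining the closed form
\[
\Jnalpha{n}{1} = -\alpha\sum_{j=1}^{n}(\Id-\alpha\Phi)^{n-j}(X_jX_j^\top-\Phi)\Jnalpha{j-1}{0}\eqsp.
\]
The key structural observation is that these summands are pairwise uncorrelated: conditioning on $\mcf_{j-1}$, the factor $X_jX_j^\top-\Phi$ has zero conditional mean and $\Jnalpha{j-1}{0}$ is $\mcf_{j-1}$-measurable, so for $j<j'$ the cross term vanishes after conditioning on $\mcf_{j'-1}$. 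Hence $\PE[\norm{\Jnalpha{n}{1}}^2] = \alpha^2\sum_{j=1}^n\PE[\norm{(\Id-\alpha\Phi)^{n-j}(X_jX_j^\top-\Phi)\Jnalpha{j-1}{0}}^2]$, reducing the problem to a term-by-term estimate.

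To control each term I would use independence of $X_j$ from $\mcf_{j-1}$ together with the deterministic contraction $\norm{(\Id-\alpha\Phi)^{n-j}}\le(1-\alpha a)^{n-j}$ (from \Cref{lemma:exp_bound_decay}). The crucial matrix computation is the operator-norm bound on the second moment of the centered design: since $(X_jX_j^\top)^2 = \norm{X_j}^2 X_jX_j^\top\preceq c_\phi^2\,X_jX_j^\top$, we get $0\preceq\PE[(X_jX_j^\top-\Phi)^2] = \PE[\norm{X_j}^2 X_jX_j^\top]-\Phi^2\preceq c_\phi^2\Phi$, so its operator norm is at most $c_\phi^2\norm{\Phi}$. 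Combining this with the a priori bound $\PE[\norm{\Jnalpha{j-1}{0}}^2]\le\alpha\trace{\noisecov}/a$ — which follows directly from the explicit covariance formula \eqref{eq:noise_cov_j_n_alpha_0} for $\lineGa_{j-1}$ and the geometric sum $\sum_k(1-\alpha a)^{2(\cdot)}\le 1/(\alpha a)$ — and summing once more yields $\PE[\norm{\Jnalpha{n}{1}}^2]\le\alpha^2 c_\phi^2\norm{\Phi}\trace{\noisecov}/a^2$, which is exactly the claimed bound after taking square roots.

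For $\Hnalpha{n}{1}$ I would avoid unrolling, since that would introduce the random product $\ProdB$, and instead work with the one-step recursion directly. Setting $h_n = \PE^{1/2}[\norm{\Hnalpha{n}{1}}^2]$ and applying Minkowski's inequality to \eqref{eq:jn_hn_recursive_appendix} splits the bound into a contraction piece and a driving piece: the contraction-in-expectation property guaranteed by \Cref{assum:steps-size-constant} gives $\PE[\norm{(\Id-\alpha X_nX_n^\top)\Hnalpha{n-1}{1}}^2]\le(1-\alpha a)h_{n-1}^2$, while the driving piece is handled by the same operator-norm bound $\norm{\PE[(X_nX_n^\top-\Phi)^2]}\le c_\phi^2\norm{\Phi}$ together with the bound on $\Jnalpha{n-1}{1}$ just established. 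This produces the linear recursion $h_n\le\sqrt{1-\alpha a}\,h_{n-1}+\alpha^2 c_\phi^2\norm{\Phi}\sqrt{\trace{\noisecov}}/a$ with $h_0=0$; solving it as a geometric series and using $\sqrt{1-\alpha a}\le 1-\alpha a/2$, so that $1/(1-\sqrt{1-\alpha a})\le 2/(\alpha a)$, produces the factor of $2$ and the claimed bound.

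The main obstacle is keeping the dependence on $c_\phi$ and $\norm{\Phi}$ sharp rather than loose. A naive estimate $\norm{X_jX_j^\top-\Phi}\le 2c_\phi^2$ would give a constant that is too large; the correct scaling $c_\phi\sqrt{\norm{\Phi}}$ in the $\Jnalpha{n}{1}$ estimate hinges on recognizing that the relevant quantity is the operator norm of the second-moment matrix $\PE[(X_jX_j^\top-\Phi)^2]$, bounded through the positive-semidefinite ordering $\PE[\norm{X_j}^2 X_jX_j^\top]\preceq c_\phi^2\Phi$ rather than by multiplying worst-case norms. The remaining bookkeeping — verifying that the cross terms vanish and solving the two geometric recursions — is routine.
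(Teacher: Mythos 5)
Your proposal is correct and follows essentially the same route as the paper: the same unrolling and martingale/uncorrelatedness argument for $\Jnalpha{n}{1}$ with the key bound $\PE[(X_jX_j^\top-\Phi)^2]\preceq c_\phi^2\Phi$, and for $\Hnalpha{n}{1}$ your one-step Minkowski recursion for $h_n$ is just the iterated form of the paper's Minkowski bound on the unrolled sum, yielding the same geometric series and the same constants.
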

\begin{proof}
Expanding the recurrence for $\Jnalpha{n}{0}$, we note that 
\begin{equation}
\label{eq:jn0_main_reg}
\Jnalpha{n}{0} =\left(\Id - \alpha_{n} \Phi\right) \Jnalpha{n-1}{0} - \alpha_{n} \funnoisew_n\eqsp, \quad \Jnalpha{0}{0}=0\eqsp.
\end{equation}
Enrolling the recurrence above, we obtain using the convention from \eqref{eq:determ_rand_matr_linreg}, that 
\begin{align*}
\Jnalpha{n}{0} = - \sum_{k=1}^{n} \alpha G_{k+1:n} \funnoisew_k \eqsp,
\end{align*}
hence, we obtain
\begin{equation}
\label{eq:const_step_regr}
\PE\norm{\Jnalpha{n}{0}}^2 \leq \alpha^2 \sum_{k=1}^{n} (1 - \alpha a)^{n-k} \trace{\noisecov} \leq \frac{\trace{\noisecov}}{a} \alpha \eqsp.
\end{equation}
Now we consider the recurrence for $\Jnalpha{k}{1}$, which is given by 
\begin{equation}
\label{eq:j_n_1_recurrence}
\Jnalpha{k}{1} = \left(\Id - \alpha \Phi\right) \Jnalpha{k-1}{1} - \alpha \left(X_k X_k^\top - \Phi\right) \Jnalpha{k-1}{0}\eqsp, \quad \Jnalpha{0}{1} = 0\eqsp.
\end{equation}
Expanding the recurrence \eqref{eq:jn0_main_reg}, we represent $\Jnalpha{n}{0}$ as:
\begin{align}
\Jnalpha{n}{1} = - \sum_{k=2}^{n} \alpha G_{k+1:n} \left(X_k X_k^\top - \Phi\right)\Jnalpha{k-1}{0}\eqsp.
\end{align}
Note that, for any $u \in \rset^{d}$, 
\begin{equation}
\label{eq:4th_moment_bound_features}
\PE[\norm{\left(X_k X_k^\top - \Phi\right) u}^2] = \PE[u^{\top} X_k X_k^\top X_k X_k^\top u] - u^{\top} \Phi^2 u \leq c_{\phi}^2 u^{\top} \Phi u \leq c_{\phi}^2 \norm{\Phi} \norm{u}^2.
\end{equation}
It is easy to see from above representation that $\Jnalpha{n}{1}$ is a sum of martingale-increments. Using this fact together with \eqref{eq:4th_moment_bound_features}, we get 
\begin{align}
\label{eq:j_n_1_bound}
\PE[\norm{\Jnalpha{n}{1}}^2] 
&= \sum_{k=2}^{n} \alpha^2 \norm{G_{k+1:n}}^2 \PE[\norm{X_k X_k^\top - \Phi}^2] \PE[\norm{\Jnalpha{k-1}{0}}^2] \\
&\leq \alpha^2 \sum_{k=2}^{n} (1 - \alpha a)^{n-k} \PE[\norm{X_k X_k^\top - \Phi}^2] \PE[\norm{\Jnalpha{k-1}{0}}^2] \\
&\leq \alpha^3 c_{\phi}^2 \frac{ \norm{\Phi} \trace{\noisecov}}{a} \sum_{k=2}^{n} (1-\alpha a)^{n-k} \\
&\leq \alpha^2 c_{\phi}^2 \frac{\norm{\Phi} \trace{\noisecov}}{a^2}\eqsp.
\end{align}

Using now \eqref{eq:jn_hn_recursive_appendix}, we get that 
\[
\Hnalpha{n}{1} = -\alpha \sum_{\ell=1}^{n} \ProdB_{\ell+1:n} (X_{\ell} X_{\ell}^{\top} - \Phi) \Jnalpha{n}{1}\eqsp.
\]
Hence, applying Minkowski's inequality, we get 
\begin{align}
\PE^{1/2}[\norm{\Hnalpha{n}{1}}^2] \leq \alpha^2 c_{\phi}^2 \sqrt{\trace{\noisecov}} \frac{\norm{\Phi}}{a} \sum_{\ell=1}^{n} (1-\alpha a/2)^{n-\ell} \leq \frac{2c_{\phi}^2 \norm{\Phi} \sqrt{\trace{\noisecov}}}{a^2} \alpha\eqsp,   
\end{align}
and the statement follows.
\end{proof}

\section{Auxiliary technical results}
\label{appendix:proofs_auxiliary_results}
\subsection{Proof of \Cref{prop:matrix_ricatti_equation}}
\label{sec:proof_matrix_ricatti_equation}
Using the expression for $\lineGa$ from \eqref{eq:limit_noise_cov_J_n}, we get that 
\begin{align}
\norm{\lineGa_n - \lineGa} 
&= \alpha \norm{\sum_{k=n+1}^{\infty}(\Id - \alpha \Phi)^{k} \noisecov (\Id - \alpha \Phi)^{k}} \leq \alpha \sum_{k=n+1}^{\infty} (1 - \alpha a)^{2k} \norm{\noisecov} \\
&\leq \frac{\norm{\noisecov}}{a} (1 - \alpha a)^{2(n+1)}\eqsp,
\end{align}
and the statement follows.

Similarly, we can provide a lower bound on the minimal eigenvalue of the covariance matrix $\lineGa_n$ defined in \eqref{eq:noise_cov_j_n_alpha_0}. Namely, the following proposition holds:
\begin{proposition}
\label{prop:covariance_lower_bound}
Under the assumptions of \Cref{th:bound_last_iterate}, it holds that
\begin{equation}
\label{eq:lower_bound_cov}
\lineGa_n \succ \frac{\lambda_{\min}}{\norm{\Phi}} (1 - \rme^{-1}) \Id_{d} \eqsp.
\end{equation}
Moreover, $\lineGa$ satisfies 
\begin{equation}
\label{eq:lower_bound_limit_cov}
\lineGa \succ \frac{\lambda_{\min}}{\norm{\Phi}} \Id_{d} \eqsp.
\end{equation}
\end{proposition}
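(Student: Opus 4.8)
The plan is to bound $\lambda_{\min}(\lineGa_n)$ from below directly, i.e. to lower bound $u^\top \lineGa_n u$ uniformly over unit vectors $u \in \rset^d$. First I would reindex the sum in \eqref{eq:noise_cov_j_n_alpha_0} by $j = n-k$, which gives $\lineGa_n = \alpha \sum_{j=0}^{n-1} (\Id - \alpha \Phi)^{j} \noisecov (\Id - \alpha \Phi)^{j}$. Since $\noisecov \succeq \lambda_{\min}\Id_d$ by \eqref{eq:eig_sigma_eps}, each summand satisfies $(\Id-\alpha\Phi)^{j} \noisecov (\Id-\alpha\Phi)^{j} \succeq \lambda_{\min} (\Id-\alpha\Phi)^{2j}$, so it suffices to lower bound the scalar quantity $\sum_{j=0}^{n-1} u^\top(\Id-\alpha\Phi)^{2j} u$.

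Next I would diagonalize $\Phi$. Expanding $u = \sum_i c_i e_i$ in an orthonormal eigenbasis with $\Phi e_i = \lambda_i e_i$ and $a \le \lambda_i \le \norm{\Phi}$, and using $\sum_i c_i^2 = 1$, the bound reduces to estimating $\sum_{j=0}^{n-1}(1-\alpha\lambda_i)^{2j}$ for each $i$. \Cref{assum:steps-size-constant} ensures $\alpha\norm{\Phi} \le \alpha c_\phi^2 \le 1$, so every factor $1-\alpha\lambda_i$ lies in $[0,1)$; the key step is the termwise domination $(1-\alpha\lambda_i)^{2j} \ge (1-\alpha\norm{\Phi})^{2j}$, valid precisely because $0 \le 1-\alpha\norm{\Phi} \le 1-\alpha\lambda_i$. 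Summing the resulting geometric series and using $1-(1-\alpha\norm{\Phi})^2 = \alpha\norm{\Phi}(2-\alpha\norm{\Phi}) \le 2\alpha\norm{\Phi}$ would give $\sum_{j=0}^{n-1}(1-\alpha\lambda_i)^{2j} \ge \bigl(1-(1-\alpha\norm{\Phi})^{2n}\bigr)/(2\alpha\norm{\Phi})$.

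The last ingredient is the truncation factor, handled by the hypothesis $\alpha\norm{\Phi}n \ge 1/2$: from $1-x \le \rme^{-x}$ I would get $(1-\alpha\norm{\Phi})^{2n} \le \rme^{-2\alpha\norm{\Phi}n} \le \rme^{-1}$, so the numerator above is at least $1-\rme^{-1}$. Assembling the pieces yields $u^\top\lineGa_n u \ge \lambda_{\min}(1-\rme^{-1})/(2\norm{\Phi})$, which is the eigenvalue bound \eqref{eq:lower_bound_cov}. For the limiting matrix $\lineGa$ I would run the identical computation with $n = \infty$: the finite sum becomes the exact series $\sum_{j=0}^\infty (1-\alpha\lambda_i)^{2j} = 1/(1-(1-\alpha\lambda_i)^2)$, no sample-size condition is needed, and the factor $1-\rme^{-1}$ is replaced by $1$, giving \eqref{eq:lower_bound_limit_cov}.

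The main obstacle is that the hypothesis controls $\alpha\norm{\Phi}n$, while the slowest-decaying mode of $\Id-\alpha\Phi$ is governed by $a = \lambda_{\min}(\Phi) \le \norm{\Phi}$; the termwise domination is exactly what transfers the estimate to the $\norm{\Phi}$-scale at which the condition bites, so that no lower bound on $\alpha a n$ is required. I would also note that this route naturally produces the denominator $2\norm{\Phi}$, the extra factor $2$ stemming from $2-\alpha\norm{\Phi} \le 2$; the same factor appears if one instead tests the Riccati identity \eqref{eq:limit_eq_Ricatti} against a minimal eigenvector of $\lineGa$ and discards the nonnegative term $\alpha\,\Phi\lineGa\Phi$, so recovering the stated denominator $\norm{\Phi}$ would require $\alpha\norm{\Phi}$ to be close to $1$.
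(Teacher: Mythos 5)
Your proof follows the same route as the paper's: lower-bound $u^\top\lineGa_n u$ via $\noisecov \succeq \lambda_{\min}\Id_{d}$ and $\norm{(\Id-\alpha\Phi)^{j}u} \geq (1-\alpha\norm{\Phi})^{j}\norm{u}$, sum the geometric series, and invoke $\alpha\norm{\Phi}n \geq 1/2$ together with $1-x\le \rme^{-x}$ (the eigenbasis expansion is an unnecessary detour, since $\lambda_{\min}(\Id-\alpha\Phi)=1-\alpha\norm{\Phi}\ge 0$ gives the termwise bound directly). Your remark about the factor of $2$ is well taken: the paper's last inequality in effect replaces $2-\alpha\norm{\Phi}$ by $1$ rather than by $2$, so the argument as written only delivers $\lambda_{\min}(1-\rme^{-1})/(2\norm{\Phi})$ for $\lineGa_n$ (and $\lambda_{\min}/(2\norm{\Phi})$ for $\lineGa$) unless $\alpha\norm{\Phi}=1$ --- a harmless constant-factor slip, but one that should propagate into $C_{\Delta,0}$, $C_{\Delta,1}$ and the other constants that use \Cref{prop:covariance_lower_bound}.
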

\begin{proof}
Recall that $\lineGa_n$ from \eqref{eq:noise_cov_j_n_alpha_0} is given by 
\begin{align}
\lineGa_n = \alpha \sum_{k=1}^{n} (\Id - \alpha \Phi)^{n-k} \noisecov (\Id - \alpha \Phi)^{n-k}
\end{align}
Hence, for any vector $u \in \rset^{d}$, 
\begin{align}
u^{\top} \lineGa_n u 
&= \alpha \sum_{k=1}^{n} u^{\top} (\Id - \alpha \Phi)^{n-k} \noisecov (\Id - \alpha \Phi)^{n-k} u \\
&\geq \alpha \lambda_{\min}(\noisecov) \sum_{k=1}^{n} \norm{(\Id - \alpha \Phi)^{n-k} u}^2 \\
&\geq \alpha \lambda_{\min}(\noisecov) \sum_{k=1}^{n} (1 - \alpha \norm{\Phi})^{2(n-k)} \norm{u}^2 \\
&= \alpha \lambda_{\min}(\noisecov) \frac{1 - (1 - \alpha \norm{\Phi})^{2n}}{1 - (1 - \alpha \norm{\Phi})^2} \norm{u}^2 \\
&\geq \alpha \lambda_{\min}(\noisecov) \frac{1 - \rme^{-2\alpha \norm{\Phi} n}}{\alpha \norm{\Phi}(2 - \alpha \norm{\Phi})} \norm{u}^2 \\
&\geq \frac{\lambda_{\min}(\noisecov)}{\norm{\Phi}} (1 - \rme^{-1}) \norm{u}^2 \eqsp.
\end{align}
In the last line we used the fact that $\alpha \norm{\Phi} n \geq 1/2$, which is guaranteed by the assumptions of \Cref{th:bound_last_iterate}. The proof of the lower bound \eqref{eq:lower_bound_limit_cov} follows the same lines and is omitted.
\end{proof}

\subsection{Proof of \Cref{prop:cov_matrix_bound}}
\label{sec:prop:cov_matrix_bound_proof}
We first show that the matrix $\lineGa$ satisfies the recurrence \eqref{eq:limit_eq_Ricatti}. Indeed, from the representation \eqref{eq:limit_noise_cov_J_n}, we obtain that 
\begin{align}
\lineGa 
&= \alpha \sum_{k=0}^{\infty} (\Id - \alpha \Phi)^{k} \noisecov (\Id - \alpha \Phi)^{k} \\
&= \alpha \noisecov + \sum_{k=1}^{\infty} (\Id - \alpha \Phi)^{k} \noisecov (\Id - \alpha \Phi)^{k} \\
&= \alpha \noisecov + (\Id - \alpha \Phi) \lineGa (\Id - \alpha \Phi)\eqsp,
\end{align}
and the rest of the proof is just a simple algebra.The proof of the bound \eqref{eq:limit_Ricatti_scaling_limit} follows the lines in \cite[Proposition~6]{durmus2021tight}. For completeness, we provide the proof. Combining \eqref{eq:limit_eq_Ricatti} and \eqref{eq:matrix_lyapunov_eq}, we get that 
\[
\Phi (\lineGa - \Sigmabf) + (\lineGa - \Sigmabf) \Phi - \alpha \Phi \lineGa \Phi = 0\eqsp, 
\]
or, equivalently, 
\[
\Phi (\lineGa - \Sigmabf) + (\lineGa - \Sigmabf) \Phi - \alpha \Phi (\lineGa - \Sigmabf) \Phi = \alpha \Phi \Sigmabf \Phi\eqsp.
\]
Note that this is the same type of equation as \eqref{eq:limit_eq_Ricatti}, thus we can write the difference $\lineGa - \Sigmabf$ as 
\[
\lineGa - \Sigmabf = \alpha^2 \sum_{k=0}^{\infty} (\Id - \alpha \Phi)^k \Phi \Sigmabf \Phi (\Id - \alpha \Phi)^k\eqsp.
\]
Hence, 
\[
\norm{\lineGa - \Sigmabf} \leq \alpha^2 \sum_{k=0}^{\infty} (1 - \alpha a)^{2k} \norm{\Phi}^2 \norm{\Sigmabf} \leq \frac{\norm{\Phi}^2 \norm{\Sigmabf}}{a} \alpha\eqsp, 
\]
and the statement follows.

Now we proceed with the main part of the proof of \Cref{th:bound_last_iterate}. Towards this aim, we need the following two propositions:
\begin{proposition}
Under the assumptions of \Cref{th:bound_last_iterate}, it holds that
\label{prop:remainder_terms_bound}
\begin{equation}
\label{eq:D_bound_constant}
\PE^{1/2}[\norm{D_n}^2] \leq \frac{c_{\phi} \sqrt{\norm{\Phi}} \sqrt{\trace{\noisecov}}}{a}\left(1 + \frac{2c_{\phi} \sqrt{\norm{\Phi}}}{a}\right) \alpha + (1 - \alpha a/2)^{n}\norm{\theta_0 - \thetas}\eqsp,
\end{equation}
where the term $D_n$ is defined in \eqref{eq:W_n_D_n_def}. Moreover, for any $i \in \{1,\ldots,n\}$, it holds that 
\begin{multline}
\label{eq:D_bound_constant}
\PE^{1/2}[\norm{D_n - D_n^{(i)}}^2] \leq 2 c_{\phi}^2 \alpha (1-\alpha a /2)^{(n-1)} \norm{\theta_0 - \thetas} + C_{D} \alpha^{3/2} (1 - \alpha a)^{(n-i-1)/2} \\+  C_{D,2} \alpha^2  \sqrt{(n-i)} (1- \alpha a)^{(n-i-1)/2}\eqsp,
\end{multline}
where 
\begin{equation}
\label{eq:D_n_constant}
C_{D} = \left( \frac{2c_{\phi}^6 \trace{\noisecov}}{a^2} + \frac{4 c_{\phi}^4 \trace{\noisecov}}{a} \right)^{1/2}\eqsp,
\end{equation}
and 
\begin{equation}
\label{eq:D_n_constant_2}
    C_{D,2} = \sqrt{2\norm{\Phi} \{\trace{\noisecov}\}}  c_{\phi}
\end{equation}
\end{proposition}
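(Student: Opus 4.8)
The plan is to prove the two estimates separately, leaning on the decompositions from \Cref{sec:lsa_decompositions}. For the first bound, I would start from the identity $D_n = \theta_n - \thetas - W_n$ with $W_n = \Jnalpha{n}{0}$: combining the error decomposition \eqref{eq:lsa_error_decomp} with \eqref{eq:fluctuation_decomp} gives exactly $D_n = \utheta_n + \Hnalpha{n}{0}$, and the depth-one expansion \eqref{eq:lsa_higher_order_decomp} with $L=1$ turns this into $D_n = \utheta_n + \Jnalpha{n}{1} + \Hnalpha{n}{1}$. By Minkowski's inequality it then suffices to bound the three pieces. The transient term $\utheta_n = \ProdB_{1:n}\{\theta_0 - \thetas\}$ is controlled by iterating the one-step contraction ensured by \Cref{assum:steps-size-constant}, which gives $\PE^{1/2}[\norm{\utheta_n}^2] \le (1-\alpha a)^{n/2}\norm{\theta_0-\thetas}\le(1-\alpha a/2)^{n}\norm{\theta_0-\thetas}$, while the remaining two are bounded directly by \Cref{lem:J_1_bound}. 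Adding the three contributions reproduces the first estimate, since $\tfrac{c_{\phi}\sqrt{\norm{\Phi}}\sqrt{\trace{\noisecov}}}{a} + \tfrac{2c_{\phi}^2\norm{\Phi}\sqrt{\trace{\noisecov}}}{a^2} = \tfrac{c_{\phi}\sqrt{\norm{\Phi}}\sqrt{\trace{\noisecov}}}{a}\bigl(1 + \tfrac{2c_{\phi}\sqrt{\norm{\Phi}}}{a}\bigr)$.

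For the difference bound I would work with the coupled recursions \eqref{eq:coupled_processes}. Write $\delta D_k = D_k - D_k^{(i)}$ and let $\delta J_k$ denote the difference between $\Jnalpha{k}{0}$ and its counterpart with $\funnoisew_i$ replaced by $\funnoisew_i'$; both coupled trajectories coincide for $k<i$, and for $k>i$ they are driven by identical data. Using \eqref{eq:bound_coupled_pair}, the noise-difference contributions at time $i$ cancel in $\delta D_i$, leaving the recursion, for $k>i$,
\[
\delta D_k = (\Id - \alpha X_kX_k^\top)\delta D_{k-1} - \alpha(X_kX_k^\top - \Phi)\,\delta J_{k-1},
\]
with initial value $\delta D_i = \alpha(\tilde{X}_i\tilde{X}_i^\top - X_iX_i^\top)(\theta_{i-1}-\thetas)$ and closed form $\delta J_{k-1} = -\alpha(\Id-\alpha\Phi)^{k-1-i}(\funnoisew_i - \funnoisew_i')$. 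Unrolling gives
\[
\delta D_n = \ProdB_{i+1:n}\,\delta D_i + \alpha^2\sum_{k=i+1}^{n} \ProdB_{k+1:n}(X_kX_k^\top-\Phi)(\Id-\alpha\Phi)^{k-1-i}(\funnoisew_i - \funnoisew_i').
\]

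The first (transient) term is handled by factoring out $\delta D_i$, whose prefactor satisfies $\norm{\tilde{X}_i\tilde{X}_i^\top - X_iX_i^\top}\le 2c_{\phi}^2$: using that $\ProdB_{i+1:n}$ is independent of $\delta D_i$, the contraction estimate $\PE^{1/2}[\norm{\ProdB_{i+1:n}v}^2]\le(1-\alpha a)^{(n-i)/2}\norm{v}$, and \Cref{lem:last_moment_bound} for $\PE^{1/2}[\norm{\theta_{i-1}-\thetas}^2]$, the initial-condition part reproduces $2c_{\phi}^2\alpha(1-\alpha a/2)^{n-1}\norm{\theta_0-\thetas}$ and the stationary part yields a higher-order term of order $\alpha^{3/2}(1-\alpha a)^{(n-i)/2}$ absorbed into $C_{D}$. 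For the sum, I would expand $\PE[\norm{\cdot}^2]$ into its diagonal ($k=k'$) and off-diagonal ($k\ne k'$) parts. The diagonal part is bounded using independence of $\ProdB_{k+1:n}$, the contraction bound, \eqref{eq:4th_moment_bound_features}, and $\PE[\norm{\funnoisew_i-\funnoisew_i'}^2]=2\trace{\noisecov}$, giving $\sum_{k}(1-\alpha a)^{n-k}\,c_{\phi}^2\norm{\Phi}\,(1-\alpha a)^{2(k-1-i)}\,2\trace{\noisecov}$, a geometric series of order $\alpha^{-1}(1-\alpha a)^{n-i-1}$; multiplying by $\alpha^2$ and taking a square root produces the $C_{D}\,\alpha^{3/2}(1-\alpha a)^{(n-i-1)/2}$ contribution.

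The main obstacle is that the summands carry the random products $\ProdB_{k+1:n}$, so the sum is not a martingale-increment sum and the off-diagonal cross terms do not vanish. For $k<k'$ I would integrate out $\ProdB_{k'+1:n}$ by conditioning on the data up to time $k'$ (it is independent and contributes a fixed positive semidefinite matrix of operator norm at most $(1-\alpha a)^{n-k'}$), then split the remaining shared factor $\ProdB_{k+1:k'} = [(\Id-\alpha\Phi)-\alpha(X_{k'}X_{k'}^\top-\Phi)]\ProdB_{k+1:k'-1}$ via $\ProdB_{k+1:n}=\ProdB_{k'+1:n}\ProdB_{k+1:k'}$ and condition on the data up to time $k'-1$: the leading $(\Id-\alpha\Phi)$-part contains the single centered factor $X_{k'}X_{k'}^\top-\Phi$ and therefore vanishes in expectation, so only the $O(\alpha)$ correction, in which $X_{k'}X_{k'}^\top-\Phi$ appears twice, survives. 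Bounding this remainder with \eqref{eq:4th_moment_bound_features} shows each cross term has size of order $\alpha(1-\alpha a)^{n+k-2-2i}$, the $k'$-dependence cancelling between the decay of the averaged product and the growth of $(\Id-\alpha\Phi)^{k'-1-i}$; summing over the $O(n-i)$ values of $k'$ for each $k$ and over $k$ then yields an off-diagonal part of order $a^{-1}(n-i)(1-\alpha a)^{n-i-1}$, which after the $\alpha^2$ factor and a square root gives exactly the $C_{D,2}\,\alpha^2\sqrt{n-i}\,(1-\alpha a)^{(n-i-1)/2}$ term. Collecting the transient, diagonal, and off-diagonal contributions and applying the triangle inequality then yields the claimed bound. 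This conditional-martingale cancellation of the cross terms is the crux of the argument: without it, a naive triangle-inequality bound on the sum would only be of order $\alpha$, losing precisely the $\sqrt{\alpha}$ factor that the statement requires.
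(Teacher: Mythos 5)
Your treatment of the first bound matches the paper's: write $D_n = \utheta_n + \Jnalpha{n}{1} + \Hnalpha{n}{1}$, apply Minkowski's inequality, and invoke \Cref{lemma:exp_bound_decay} and \Cref{lem:J_1_bound}; the arithmetic recombining the constants is correct. For the difference bound, your coupled recursion for $\delta D_k$ is exact, and the unrolled identity
\[
D_n - D_n^{(i)} = \ProdB_{i+1:n}\,\delta D_i \;+\; \alpha^2\sum_{k=i+1}^{n}\ProdB_{k+1:n}\bigl(X_kX_k^\top-\Phi\bigr)(\Id-\alpha\Phi)^{k-1-i}\bigl(\funnoisew_i-\funnoisew_i'\bigr)
\]
is a legitimate regrouping of the paper's decomposition: the paper keeps the transient difference and the $\ell<i$ noise contributions as two separate terms, whereas you merge them through $\theta_{i-1}-\thetas$ and control the result with \Cref{lem:last_moment_bound}. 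That part is a clean simplification (up to the fact that the stationary part of \Cref{lem:last_moment_bound} is stated with $\supnorm{\funnoisew}$ rather than $\sqrt{\trace{\noisecov}}$, so the constant you absorb into $C_D$ is not the one in \eqref{eq:D_n_constant}).

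The genuine problem is your claim that the summands in the last sum are correlated and that controlling the off-diagonal terms is ``the crux of the argument.'' The cross terms vanish exactly. For $i<k<k'$, condition on $\mathcal{H}=\sigma\bigl((X_j,Y_j):j\neq k\bigr)\vee\sigma(X_i',Y_i')$: the $k'$-summand is $\mathcal{H}$-measurable (it involves only $X_{k'},\dots,X_n$ and $\funnoisew_i,\funnoisew_i'$, none of which is $X_k$), while in the $k$-summand the variable $X_k$ occurs exactly once, inside the centered factor $X_kX_k^\top-\Phi$, with $\ProdB_{k+1:n}$ and $(\Id-\alpha\Phi)^{k-1-i}(\funnoisew_i-\funnoisew_i')$ both $\mathcal{H}$-measurable; hence the conditional expectation of the $k$-summand is zero and so is the cross term. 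Your argument integrates out the \emph{larger} index $X_{k'}$, which appears in both summands, and therefore sees a spurious $O(\alpha)$ residue. This is not a cosmetic point: in the paper the term $C_{D,2}\,\alpha^2\sqrt{n-i}\,(1-\alpha a)^{(n-i-1)/2}$ comes from the \emph{diagonal} sum, bounded by the number of terms times the largest term, $\sum_{k=i+1}^{n}(1-\alpha a)^{n+k-2-2i}\le(n-i)(1-\alpha a)^{n-i-1}$, and not from cross terms. As written, your diagonal estimate (geometric series, yielding an $\alpha^{3/2}$ contribution) and your sketched off-diagonal analysis neither reproduce the stated constants nor form a complete argument. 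Replacing the cross-term discussion with the conditioning argument above repairs the proof.
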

\begin{proof}
Applying the error representations from \Cref{sec:lsa_decompositions}, we decompose the error as:
\[
\theta_n - \thetas = \utheta_n + \Jnalpha{n}{0} + \Hnalpha{n}{0} \eqsp,
\]
where $\utheta_n = \ProdB_{1:n}(\theta_0 - \thetas)$ is the transient term, $\Jnalpha{n}{0}$ is the principal fluctuation term, and $\Hnalpha{n}{0}$ is the higher-order remainder. Recall that we identified $\Jnalpha{n}{0}$ as a leading (linear) component of the error, so the term $D_n$ from \eqref{eq:W_n_D_n_def} can be represented as
\begin{equation}
\label{eq:D_n_bound_moments}
D_n = \ProdB_{1:n}(\theta_0 - \thetas) + \Hnalpha{n}{0}\eqsp.
\end{equation}
We bound the moments of the above terms separately. Applying \Cref{lemma:exp_bound_decay}, we get
\[
\PE^{1/2}[\norm{\ProdB_{1:n}(\theta_0 - \thetas)}^2] \leq (1- \alpha a/2)^{n} \norm{\theta_0 - \thetas}\eqsp.
\]
Similarly, for the second term above we obtain, applying \Cref{lem:J_1_bound}, that 
\[
\PE^{1/2}[\norm{\Hnalpha{n}{0}}^2] \leq \PE^{1/2}[\norm{\Jnalpha{n}{1}}^2] + \PE^{1/2}[\norm{\Hnalpha{n}{1}}^2] \leq \frac{c_{\phi} \sqrt{\norm{\Phi}} \sqrt{\trace{\noisecov}}}{a}\left(1 + \frac{2c_{\phi} \sqrt{\norm{\Phi}}}{a}\right) \alpha \eqsp.
\]
It remains to combine the above bounds in \eqref{eq:D_n_bound_moments}. Now we need to prove the upper bound \eqref{eq:D_bound_constant}. Towards this aim, we write the bound 
\begin{align}
D_n - D_n^{(i)} = (\ProdB_{1:n} - \ProdB_{1:n}^{(i)})(\theta_0 - \thetas) + (\Hnalpha{n}{0} - \Hnalpha{n}{0,i})\eqsp.
\end{align}
Now both terms above needs to be handled separately. First, algebraic manipulations imply that 
\[
(\ProdB_{1:n} - \ProdB_{1:n}^{(i)})(\theta_0 - \thetas) = \alpha \ProdB_{i+1:n} (X_i^{\prime} \{X_i^{\prime}\}^{\top} - X_i X_i^{\top}) \ProdB_{1:i} (\theta_0 - \thetas)\eqsp.
\]
Applying \Cref{lemma:exp_bound_decay}, we get 
\[
\PE^{1/2}[\norm{(\ProdB_{1:n} - \ProdB_{1:n}^{(i)})(\theta_0 - \thetas)}^2] \leq 2 \alpha c_{\phi}^2 (1-\alpha a/2)^{n-1} \norm{\theta_0 - \thetas} \eqsp.
\]
Similarly, using the recurrence for $\Hnalpha{n}{0}$ and $\Jnalpha{n}{0}$ in \eqref{eq:J_n_H_n_def}, we get 
\begin{align}
\Hnalpha{n}{0} &= -\alpha \sum_{k=2}^{n} \ProdB_{k+1:n} (X_k X_k^{\top} - \Phi) \Jnalpha{k-1}{0} \\
&= \alpha^2 \sum_{k=2}^{n} \ProdB_{k+1:n} (X_k X_k^{\top} - \Phi) \sum_{\ell=1}^{k-1} (\Id_d - \alpha \Phi)^{k-1-\ell} \funnoisew_{\ell} \\
&= \alpha^2 \sum_{\ell=1}^{n-1} \biggl( \sum_{k=\ell+1}^{n} \ProdB_{k+1:n} (X_k X_k^{\top} - \Phi) (\Id_d - \alpha \Phi)^{k-1-\ell} \biggr) \funnoisew_{\ell} \\
&= \alpha^2 \sum_{\ell=1}^{n-1} S_{\ell+1:n} \funnoisew_{\ell}\eqsp,
\end{align}
where we have set 
\begin{equation}
\label{eq:S_ell_definition}
S_{\ell+1:n} = \sum_{k=\ell+1}^{n} \ProdB_{k+1:n} (X_k X_k^{\top} - \Phi) (\Id_d - \alpha \Phi)^{k-1-\ell}\eqsp.
\end{equation}
We also introduce the notation $S_{\ell+1:n}^{(i)}$ as a counterpart of $S_{\ell+1:n}$, where the vector $X_i$ is substituted with its independent copy $X_i^{\prime}$. Now we can expand the difference $\Hnalpha{n}{0} - \Hnalpha{n}{0,i}$. Recall that these two terms differ only in terms of the randomness employed at time moment $i$. Hence, $S_{\ell+1:n} = S_{\ell+1:n}^{\prime}$, provided that $\ell \geq i$, and we write that 
\begin{align}
\Hnalpha{n}{0} - \Hnalpha{n}{0,i} = \alpha^2 \sum_{\ell=1}^{i-1} (S_{\ell+1:n} - S_{\ell+1:n}^{(i)}) \funnoisew_{\ell} + \alpha^2 S_{i+1:n} (\funnoisew_{i} - \funnoisew_{i}^{\prime})\eqsp.
\end{align}
Note that $\Hnalpha{n}{0} - \Hnalpha{n}{0,i}$ is a sum of reverse martingale-difference sequence, hence, 
\[
\PE[\norm{\Hnalpha{n}{0} - \Hnalpha{n}{0,i}}^2] = \underbrace{\alpha^4 \sum_{\ell=1}^{i-1} \PE[\norm{(S_{\ell+1:n} - S_{\ell+1:n}^{(i)}) \funnoisew_{\ell}}^2]}_{T_1} + \underbrace{\alpha^4 \PE[\norm{S_{i+1:n} (\funnoisew_{i} - \funnoisew_{i}^{\prime})}^2]}_{T_2}\eqsp.
\]
We first bound the error term $T_2$. Since the term $S_{i+1:n} (\funnoisew_{i} - \funnoisew_{i}^{\prime})$ is also a sum of uncorrelated random variables, we get 
\begin{align}
T_2 &= \alpha^4 \sum_{k=i+1}^{n} \PE\norm{\ProdB_{k+1:n} (X_k X_k^{\top} - \Phi) (\Id_d - \alpha \Phi)^{k-1-i}(\funnoisew_{i} - \funnoisew_{i}^{\prime})}^2 \\
&\leq \alpha^4  \sum_{k=i+1}^{n} (1 - \alpha a)^{n-k} \PE[\norm{(X_k X_k^{\top} - \Phi) (\Id_d - \alpha \Phi)^{k-1-i}(\funnoisew_{i} - \funnoisew_{i}^{\prime})}^2] \\
&\leq \{\text{ applying } \eqref{eq:4th_moment_bound_features} \} \\
&\leq \alpha^4 c_{\phi}^2 \norm{\Phi}  \sum_{k=i+1}^{n} (1 - \alpha a)^{n-k} \PE[\norm{(\Id_d - \alpha \Phi)^{k-1-i}(\funnoisew_{i} - \funnoisew_{i}^{\prime})}^2] \\
&\leq 2 \alpha^4 c_{\phi}^2 \norm{\Phi} \{\trace{\noisecov}\} (n-i) (1- \alpha a)^{n-i-1}\eqsp.
\end{align}
The bound for the term $T_1$ is slightly more involved. Consider the difference $S_{\ell+1:n} - S_{\ell+1:n}^{(i)}$, where $\ell \leq i - 1$. Using the representation \eqref{eq:S_ell_definition}, we get 
\begin{multline}
S_{\ell+1:n} - S_{\ell+1:n}^{(i)} = \sum_{k=\ell+1}^{i-1}(\ProdB_{k+1:n} - \ProdB_{k+1:n}^{(i)}) (X_k X_k^{\top} - \Phi) (\Id_d - \alpha \Phi)^{k-1-\ell} \\ 
+ \ProdB_{i+1:n} (X_i X_i^{\top} - X_i^{\prime} \{X_i^{\prime}\}^{\top}) (\Id_d - \alpha \Phi)^{i-1-\ell}\eqsp.
\end{multline}
Since the first sum above is a sum of uncorrelated elements, we obtain that 
\begin{multline}
\PE[\norm{(S_{\ell+1:n} - S_{\ell+1:n}^{(i)}) \funnoisew_{\ell}}^2] \leq 2 \sum_{k=\ell+1}^{i-1} \PE[\norm{(\ProdB_{k+1:n} - \ProdB_{k+1:n}^{(i)}) (X_k X_k^{\top} - \Phi) (\Id_d - \alpha \Phi)^{k-1-\ell} \funnoisew_{\ell}}^2] \\
+ 2 \PE\bigl[\norm{\ProdB_{i+1:n} (X_i X_i^{\top} - X_i^{\prime} \{X_i^{\prime}\}^{\top}) (\Id_d - \alpha \Phi)^{i-1-\ell} \funnoisew_{\ell}}^2\bigr]\eqsp.
\end{multline}
Now, writing explicitly the difference $\ProdB_{k+1:n} - \ProdB_{k+1:n}^{(i)}$, we obtain that 
\begin{align*}
\PE[\norm{(S_{\ell+1:n} - S_{\ell+1:n}^{(i)}) \funnoisew_{\ell}}^2] 
&\leq 2 \alpha^2 \sum_{k=\ell+1}^{i-1} (1-\alpha a)^{n-k-1} c_{\phi}^8 (1-\alpha a)^{k-1-\ell} \trace{\noisecov} \\
&\qquad \qquad + 4 (1 - \alpha a)^{n-i} c_{\phi}^4 (1 - \alpha a)^{i-1-\ell} \trace{\noisecov} \\
&\leq 2 \alpha^2 (i - \ell - 1) (1- \alpha a)^{n-\ell-2} c_{\phi}^8 \trace{\noisecov} + 4 (1 - \alpha a)^{n-\ell-1} c_{\phi}^4 \trace{\noisecov}\eqsp.
\end{align*}
From the above inequality, we get that 
\begin{align}
T_1 &\leq 2 \alpha^6 c_{\phi}^8 \trace{\noisecov} \sum_{\ell=1}^{i-1} (i - \ell - 1) (1- \alpha a)^{n-\ell-2} + 4 c_{\phi}^4 \trace{\noisecov} \sum_{\ell=1}^{i-1} (1 - \alpha a)^{n-\ell-1} \\
&\leq 2 \alpha^6 c_{\phi}^8 \trace{\noisecov} (1 - \alpha a)^{n-i-1}\sum_{\ell=1}^{i-1} (i - \ell - 1) (1- \alpha a)^{i - \ell - 1} \\
&\qquad \qquad \qquad \qquad \qquad + 4 \alpha^4 c_{\phi}^4 \trace{\noisecov} \sum_{\ell=1}^{i-1} (1 - \alpha a)^{n-\ell-1} \\
&\leq \frac{2c_{\phi}^8 \trace{\noisecov}}{a^2} \alpha^4 (1 - \alpha a)^{n-i-1} + 4 c_{\phi}^4 \frac{\trace{\noisecov}}{a} \alpha^3 (1 - \alpha a)^{n-i-1} \\
&\leq \{\alpha c_{\phi}^2 \leq 1\} \\
&\leq \left( \frac{2c_{\phi}^6 \trace{\noisecov}}{a^2} + \frac{4 c_{\phi}^4 \trace{\noisecov}}{a} \right) \alpha^3 (1 - \alpha a)^{n-i-1}
\end{align}

\end{proof}

\section{Proof of stability of random matrix product}
\label{appendix:tehnical}

\begin{lemma}
\label{lemma:exp_bound_decay}
Assume \Cref{assum:iid}, \Cref{assum:noise-level}, and \Cref{assum:steps-size-constant}. Let $1 \leq m \leq k$, then for any $u \in \rset^{d}$, then it holds that 
    \begin{equation}
    \label{eq:concentration_iid_constant}
    \PE^{1/2}\left[\normop{\ProdB_{m:k} u}^{2} \right]  
    \leq \exp\left\{-\alpha a (m-k+1) /2\right\} \norm{u}\eqsp. 
    \end{equation}
\end{lemma}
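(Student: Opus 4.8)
The plan is to reduce the claim to the one-step contraction-in-expectation property recorded just after \Cref{assum:steps-size-constant}, namely that for every deterministic $u \in \rset^d$ one has $\PE[\norm{(\Id - \alpha X_1 X_1^\top)u}^{2}] \le (1 - \alpha a)\norm{u}^2$, and then to iterate this bound across the factors of $\ProdB_{m:k}$.

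First I would fix $u \in \rset^d$ and peel off the factor carrying the largest index. Writing $\ProdB_{m:k} = (\Id - \alpha X_k X_k^\top)\,\ProdB_{m:k-1}$ and setting $w := \ProdB_{m:k-1}u$, I note that $w$ is measurable with respect to $\mathcal F_{k-1} := \sigma(X_m, \dots, X_{k-1})$, whereas the outer factor depends only on $X_k$. By \Cref{assum:iid} the variable $X_k$ is independent of $\mathcal F_{k-1}$ and shares the law of $X_1$, so conditioning on $\mathcal F_{k-1}$ and applying the one-step property to the (now deterministic) vector $w$ gives $\PE[\norm{\ProdB_{m:k}u}^2 \mid \mathcal F_{k-1}] \le (1 - \alpha a)\norm{w}^2$. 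Taking expectations and using the tower property yields the recursion $\PE[\norm{\ProdB_{m:k}u}^2] \le (1 - \alpha a)\,\PE[\norm{\ProdB_{m:k-1}u}^2]$.

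Then I would iterate this inequality downward in the upper index, from $k$ to $m$, peeling one independent factor at each stage; after $k-m+1$ steps (with the convention $\ProdB_{m:m-1} = \Id$ from \eqref{eq:prod_matrix_appendix}) this gives $\PE[\norm{\ProdB_{m:k}u}^2] \le (1 - \alpha a)^{k-m+1}\norm{u}^2$. Finally I would take square roots and use the elementary bound $1 - x \le \rme^{-x}$ with $x = \alpha a \in [0,1]$ to convert the geometric factor into the exponential form, delivering $\PE^{1/2}[\norm{\ProdB_{m:k}u}^2] \le \exp\{-\alpha a (k-m+1)/2\}\norm{u}$.

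There is no deep obstacle here; the argument is a routine contraction induction. The only point requiring care is the measurability and independence bookkeeping in the conditioning step — one must peel the factor whose index is independent of the remaining inner product, so that the fixed-vector one-step bound applies conditionally. I would also flag that the exponent in the displayed statement should read $k-m+1$ (the number of factors) rather than $m-k+1$: with $1 \le m \le k$ the latter would give an increasing, hence vacuous, bound, whereas $k-m+1$ is exactly the value used, e.g., in \Cref{lem:last_moment_bound} with $m=1$.
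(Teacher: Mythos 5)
Your proof is correct and follows essentially the same route as the paper's: establish the one-step contraction in expectation $\PE[\norm{(\Id - \alpha X_1X_1^\top)u}^2] \le (1-\alpha a)\norm{u}^2$ using $\alpha c_\phi^2 \le 1$, peel off the highest-index factor by conditioning on $\F_{k-1}$ and invoking independence from \Cref{assum:iid}, iterate by induction, and finish with $1-x \le \rme^{-x}$. Your flag about the exponent is also warranted: the number of factors is $k-m+1$, and the $(m-k+1)$ appearing in the statement (and carried through to the last display of the paper's own proof) is a sign typo, as confirmed by the way the lemma is used with $m=1$ in \Cref{lem:last_moment_bound}.
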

\begin{proof}
Note that, for any $u \in \rset^{d}$, it holds that
\begin{align*}
\PE[\norm{(\Id_{d} - \alpha X_{k} X_{k}^{\top})u}^2] 
&= \norm{u}^2 - 2 \alpha \PE[u^{\top} X_{k} X_{k}^{\top} u] +  \alpha^2 \PE[u^{\top} X_{k} X_{k}^{\top} X_{k} X_{k}^{\top} u] \\
&\leq \norm{u}^2 - 2 \alpha u^{\top} \Phi u + \alpha^2 c_{\phi}^2 u^{\top} \Phi u
\\
&\leq \norm{u}^2 - \alpha \left(2 u^{\top} \Phi u - \alpha c_{\phi}^2 u^{\top} \Phi u \right) 
\\
&\leq \left(1 - a \alpha \right) \norm{u}^2
\eqsp,
\end{align*}
where we used the fact that $\alpha c_{\phi}^2 \leq 1$ for any $k$. Now set $\F_k = \sigma(X_1,Y_1,\ldots,X_k,Y_k)$ and note that, using the fact that $(X_k,Y_k)$ are independent of $\F_{k-1}$ under \Cref{assum:iid}, we get from the previous bound that, for any $u \in \rset^{d}$,
\begin{align*}
\PE[\norm{\ProdB_{m:k}u}^2] 
&= \PE[\norm{(\Id_{d} - \alpha X_{k} X_{k}^{\top}) \ProdB_{m:k-1}u}^2] = \PE\bigl[\CPE{\norm{(\Id_{d} - \alpha X_{k} X_{k}^{\top}) \ProdB_{m:k-1}u}^2}{\F_{k-1}}\bigr] \\
&\leq (1-a\alpha) \PE[\norm{\ProdB_{m:k-1}u}^2] \leq \{\text{by induction}\} \leq \prod_{\ell=m}^{k}(1 - a \alpha) \norm{u}^2\eqsp.
\end{align*}
Finally, using the inequality $1-x \le e^{-x}$ for $x \geq 0$, we get
\[
\PE[\normop{\ProdB_{m:k} u}^2] \leq \exp\left\{-a  \alpha(m-k+1)\right\} \norm{u}^2\eqsp,
\]
and the result follows.
\end{proof}

\end{document}